\documentclass{article}

\usepackage{microtype}
\usepackage{graphicx}
\usepackage{subcaption}
\usepackage{booktabs} 
\usepackage{hyperref}
\usepackage{svg}
\usepackage{enumitem}

\usepackage[accepted]{icml2026}

\usepackage{amsmath}
\usepackage{amssymb}
\usepackage{mathtools}
\usepackage{amsthm}

\usepackage[capitalize,noabbrev]{cleveref}

\DeclareMathOperator{\rank}{rank}
\def\X{\mathbf{X}}
\def\m{\mathbf{m}}
\def\M{\mathbf{M}}

\def\Z{\mathbf{Z}}
\def\A{\mathbf{A}}

\def\P{\mathbf{P}}
\def\I{\mathbf{I}}
\def\R{\mathbb{R}}
\def\Q{\mathbf{Q}}

\def\L{\mathbf{L}}
\def\u{\mathbf{u}}
\def\U{\mathbf{U}}
\def\V{\mathbf{V}}

\def\w{\mathbf{w}}
\def\W{\mathbf{W}}

\def\E{\mathbb{E}}
\def\N{\mathbf{N}}

\newtheorem{notation}{Notation}
\theoremstyle{plain}
\newtheorem{theorem}{Theorem}[section]
\newtheorem{proposition}[theorem]{Proposition}
\newtheorem{lemma}[theorem]{Lemma}
\newtheorem{corollary}[theorem]{Corollary}
\theoremstyle{definition}
\newtheorem{definition}[theorem]{Definition}
\newtheorem{assumption}[theorem]{Assumption}
\theoremstyle{remark}
\newtheorem{remark}[theorem]{Remark}

\usepackage[disable,textsize=tiny]{todonotes}

\begin{document}

\twocolumn[
 \icmltitle{Mean-Shift PCA by Knockoff Mean}



  \begin{icmlauthorlist}
    \icmlauthor{Mengda Li}{sds}
    \icmlauthor{Zeng Li}{sustech}
    \icmlauthor{Jianfeng Yao}{sds}
  \end{icmlauthorlist}

    \icmlaffiliation{sustech}{Department of Statistics and Data Science,
        Southern University of Science and Technology,
        Shenzhen, China}
    \icmlaffiliation{sds}{School of Data Science, The Chinese University of Hong Kong, Shenzhen, China}

    \icmlcorrespondingauthor{Zeng Li}{liz9@sustech.edu.cn}
    \icmlcorrespondingauthor{Jianfeng Yao}{jeffyao@cuhk.edu.cn}

  \icmlkeywords{Principal Component Analysis, Robust PCA, Mean-Shift Mixture, Random Matrix Theory}

  \vskip 0.3in
]



\printAffiliationsAndNotice{}  



\begin{abstract}
{Removing noise is difficult, but adding noise is easy. In this work, we show how to eliminate mean-shift noisy components from PCA by deliberately introducing knockoff mean-shift perturbation. 
Standard PCA is highly sensitive to shifts in the sample mean: a small fraction of samples from a shifted distribution can cause large deviations in the leading principal components. 
In high-dimensional regimes, existing Robust PCA approaches cannot handle the mean-shift contamination structure inherent in the mixture model.  
Using tools from Random Matrix Theory, we prove that the mean-shift spikes are spectrally separable from the stable eigenvalues of the original covariance. 
Furthermore, the original eigenspace remains asymptotically invariant to the contamination, independent of the mixture weight. Exploiting this spectral stability, we propose a simple, two-stage PCA algorithm by adding knockoff mean that identifies and removes the mean-shift component using only standard PCA operations.
}
\end{abstract}

\section{Introduction}

High-dimensional data analysis faces inherent challenges due to the curse of dimensionality, necessitating robust dimensionality reduction techniques. Principal Component Analysis (PCA) has served as a cornerstone of multivariate statistics for decades, enabling efficient data representation and dimension reduction. Despite its widespread adoption, PCA exhibits a critical, well-documented sensitivity to violations of its core assumption—that data is centered around a single mean. This fragility is severely exposed in the prevalent setting of \emph{mean-shift contamination}, where observations are drawn from a mixture of a primary distribution and a secondary, homoscedastic component with a shifted mean. In such a regime, \emph{even a small fraction of contaminating samples can systematically bias the empirical mean and, consequently, distort the recovered principal components away from the true covariance eigenvectors.} 

\begin{figure}[ht]
  \begin{center}
  \centerline{\includesvg[width=\columnwidth]{figure/intro.svg}}
  \caption{\textbf{Failure of Classical PCA} on Gaussian data with one mean-shift cluster. 
     Data points (blue) are sampled from a 2D Gaussian mixture with two components: an inlier component centered at the origin (blue) and an outlier component with a mean shift (orange). 
     The red line indicates the first principal component estimated by standard PCA, which is 
      biased towards the outlier cluster and almost orthogonal to the first principal component (blue dashed line) of the uncontaminated data.
     $d/n=0.1$, outlier proportion $\pi_1 = 15 \%$.}
    \label{fig:intro_figure}
  \end{center}
\end{figure}

The vulnerability of standard PCA to systematic mean shifts presents a significant obstacle in modern machine learning pipelines, where data is frequently aggregated from multiple sources or contains uncurated batches. 
Despite numerous Robust PCA (RPCA) variants proposed in the literature \cite{Candes_RobustPCA,NIPS2009_c45147de,
zhou2010stable,RPCA_OutlierPursuit,NIPS2014_3e3d6580, AAP_RPCA, NEURIPS2021_8d235536, jambulapati2020robust, paul2024robust}, standard PCA remains the dominant choice in practical applications due to its widespread implementation in statistical software packages.
However, \emph{existing RPCA methods fail to recover the true principal components in the mean-shift contamination mixture models in high-dimensional regimes.} The cosine similarity between the recovered low-rank components and the true eigenvectors converges to zero as the aspect ratio of dimension to sample size increases.

\begin{figure}[ht]
  \begin{center}
  \centerline{\includesvg[width=\columnwidth]{figure/gaussian_3compare_c1_pi5.svg}}
  \caption{\textbf{Failure of Robust PCA in high dimensions with only $5\%$ noisy samples} Largest principal component cosine alignment of PCA methods on Gaussian data with one mean-shift cluster of outlier proportion $\pi_1 = 5\%$.
  The Robust PCA method fails to recover the true principal component as the dimension increases w.r.t.\ non-vanishing aspect ratio $d/n=1$, while our Mean-Shift PCA consistently recovers the true component. Other settings are described in Figure~\ref{fig:gaussian_compare}.}
    \label{fig:intro_compare}
  \end{center}
\end{figure}

In the scenario of first-moment  contamination, RPCA methods are fundamentally mismatched. 
The mean-shift noise is \emph{not sparse}, contradicting the core sparsity assumption of RPCA. Moreover, it exhibits a \emph{low-rank structure} which falls under the same hypothesis as the true signal.
Consequently, RPCA formulations do not handle mean-shift contamination well, as illustrated in Figure~\ref{fig:intro_compare}, and this misalignment is exacerbated in high dimensions.

In this work, we use the tools of Random Matrix Theory (RMT) to show that most principal components remain stable in a high-dimensional mean-shift mixture. Furthermore, we show that it is possible to only use \emph{vanilla PCA} to identify and eliminate the contamination-induced components when contamination occurs in the first moment.
RMT is essential in this regime because the aspect ratio $d/n$ does not vanish: sample eigenvalues and eigenvectors exhibit deterministic high-dimensional bias and phase-transition behavior. This allows us to distinguish the constant-order spectral displacement caused by an added knockoff mean from the $\mathcal{O}(n^{-1/2})$ fluctuations of stable covariance-induced spikes.
\paragraph{Problem Formulation}

Our objective is to recover the principal components of the original, uncontaminated data matrix $\mathbf{X}_n \in \mathbb{R}^{d \times n}$ from
the observed $(k+1)$-component mean-shift mixture
$\widetilde{\mathbf{X}}_n \in \mathbb{R}^{d \times n}$.

The original data consist of $n$ i.i.d.\ samples arranged column-wise:
\[
\mathbf{X}_n = \bigl[ \mathbf{x}_{(1)}, \dots, \mathbf{x}_{(n)} \bigr]_{d \times n}, \qquad \mathbf{x}_{(i)} \in \mathbb{R}^d .
\]

The contaminated matrix $\widetilde{\mathbf{X}}_n$ is obtained by adding a structured mean-shift matrix $\mathbf{A}_n$:
\begin{equation}
  \label{eq:add_perturb}
  \widetilde{\mathbf{X}}_n = \mathbf{X}_n + \mathbf{A}_n, \qquad 
\mathbf{A}_n = \sum_{i=1}^k \mathbf{m}_{(i)} \boldsymbol{\gamma}_{(i)}^\top.
\end{equation}

The $i$-th subpopulation is characterized by a mean vector $\mathbf{m}_{(i)} \in \mathbb{R}^d$ and a mixture weight $\pi_i \in (0,1)$. Its membership is encoded by a binary indicator vector $\boldsymbol{\gamma}_{(i)} \in \{0,1\}^n$, where $(\boldsymbol{\gamma}_{(i)})_j = 1$ if and only if the $j$-th sample originates from the $i$-th subpopulation.

The model therefore comprises a centered inlier component (with $\mathbf{m}_{(0)} = \mathbf{0}$) and $k$ outlier components with distinct mean shifts.

\begin{remark}
\label{remark:signal_plus_noise}
This formulation is commonly referred to as a signal-plus-noise model in the literature, where $\mathbf{A}_n$ typically represents a low-rank signal matrix and $\mathbf{X}_n$ corresponds to noise. However, in our context, the roles are reversed: the low-rank matrix $\mathbf{A}_n$ constitutes the mean-shift contamination that we aim to eliminate, or at least eliminate its influence with respect to the covariance structure of the underlying data $\mathbf{X}_n$.
\end{remark}

\paragraph{Contribution} We provide a precise characterization of PCA’s behavior under mean-shift mixture using 
RMT. 
Based on the additive low-rank random perturbation model \cite{BENAYCHGEORGES2012120}, we 
show that contamination from a shifted subpopulation induces an \emph{asymptotically orthogonal distortion} to the sample covariance matrix, leaving the original covariance eigenspace invariant. Our main contributions are:
\begin{itemize}
    \item \textbf{Perturbation-Based Correction:}  
  We introduce a 2-fold PCA algorithm that recovers the stable covariance structure by strategically introducing another artificial mean-shift perturbation to detect non-stable principal components induced by a mean-shift mixture. This procedure 
  is effective when the mixture weight is sufficiently prominent.
  \item \textbf{Invariant Principal Components:} We show that there exists an asymptotic invariance of the underlying covariance eigenspace of general data with compactly supported spectrum, i.e., most of the principal components of the original data are stable, regardless of the mean-shift mixture weight.

\end{itemize}

\paragraph{Paper Organization}
The paper is organized as follows: 
Section~\ref{sec:related} surveys relevant literature and positions our contribution within existing work. The proposed Mean-Shift PCA algorithm is stated in Section~\ref{sec:method}.
Our main theoretical results, including mathematical assumptions for each model and the RMT-based recovery  guarantees, are presented in Section~\ref{sec:theory}. 
Section~\ref{sec:experiment} provides numerical validation compared to robust PCA methods, robust covariance estimators, and simple preprocessing baselines.
Finally, Section~\ref{sec:fin} discusses broader implications, limitations, and potential extensions of our work.

\subsection{Related Work}\label{sec:related}

\paragraph{Principal Component Pursuit}
Robust Principal Component Analysis aims to decompose the data matrix into a low-rank signal component $\mathbf{L}$ and a sparse outlier component $\mathbf{S}$.
The original RPCA formulation proposed by \citet{Candes_RobustPCA} solves:
\begin{equation}
\label{eq:rpca_standard}
\begin{aligned}
& \min_{\mathbf{L}, \mathbf{S}} 
& & \|\mathbf{L}\|_* + \alpha \|\mathbf{S}\|_1 \\
& \text{subject to}
& & \widetilde{\mathbf{X}} = \mathbf{L} + \mathbf{S},
\end{aligned}
\end{equation}
where $\|\cdot\|_*$ denotes the nuclear norm, $\|\cdot\|_1$ is the vectorwise $\ell_1$-norm, and $\alpha > 0$ is a regularization parameter typically set to $\alpha = 1/\sqrt{\max(d,n)}$.

\textbf{Stable PCP} \cite{zhou2010stable} relaxes the equality constraint in \eqref{eq:rpca_standard} as
\(
\|\widetilde{\mathbf{X}} - \mathbf{L} - \mathbf{S}\|_F \leq \delta
\)
to account for small dense noise \(
\N = \widetilde{\mathbf{X}} - \mathbf{L} - \mathbf{S}
\),
where $\delta > 0$ bounds the Frobenius norm of the noise component. 
\textbf{Outlier Pursuit} \cite{RPCA_OutlierPursuit} modifies the $\ell_{1}$-norm in \eqref{eq:rpca_standard} to the $\ell_{1,2}$-norm to promote column-wise sparsity in $\mathbf{S}$.

More recent work \cite{AAP_RPCA} formulates the RPCA problem with explicit rank and sparsity constraints:
\begin{equation}
\label{eq:rpca}
\begin{aligned}
& \min_{\mathbf{L}, \mathbf{S}} 
& & \|\widetilde{\mathbf{X}} - \mathbf{L} - \mathbf{S}\|_F \\
& \text{subject to}
& & \rank{(\mathbf{L})} \le r, \ \|\mathbf{S}\|_0 \le s,
\end{aligned}
\end{equation}
for some pre-specified rank $r$ and sparsity level $s$.

However, none of these RPCA formulations can effectively handle the mean-shift contamination structure inherent in the mean-shift mixture model. The resulting low-rank components in $\mathbf{L}$ poorly approximate the true covariance eigenspace (largest principal components of ${\mathbf{X}}$), as quantified by the vanishing cosine similarity in high dimensions. 

\paragraph{$\ell_{1}$-PCA}

A prominent approach to robust PCA replaces the $\ell_2$-norm with the $\ell_1$-norm in the optimization objective to enhance resistance to outliers. \citet{Kwak_L1_2008} proposed an $\ell_1$-norm maximization formulation for PCA:
\[
\mathbf{W}^* = \arg \max_{\mathbf{W}} | \mathbf{W}^\top \widetilde{\mathbf{X}} |_1 \quad \text{subject to} \quad \mathbf{W}^\top \mathbf{W} = \mathbf{I}_m,
\]
which aims to find the projection direction $\mathbf{W}^*$ maximizing the $\ell_1$-norm of the projected data. This formulation reduces the influence of outliers due to the linear growth of the $\ell_1$-norm compared to the quadratic growth of the $\ell_2$-norm.
Subsequent advances \cite{Markopoulos_L1_2014, Markopoulos_L1_2017} developed more efficient algorithms for $\ell_1$-PCA. 

This formulation, however, leads to  non-convex optimization problems that are, in general, NP-hard; their solutions usually converge to local minima. 
Existing algorithms exhibit exponential complexity in the feature dimension $d$, rendering them \emph{computationally prohibitive for high-dimensional data}.

\paragraph{Median-of-Means PCA}
\citet{paul2024robust} propose Median-of-Means PCA (MoMPCA), which views PCA as empirical risk minimization over rank-$r$ projection matrices and replaces the usual average reconstruction loss by the median of blockwise losses. The method first centers the data using a robust location estimator, partitions the samples into blocks, and optimizes the median-of-means reconstruction objective by projected gradient updates on an orthonormal basis. 
In finite-dimensional implementations, however, MoMPCA remains an iterative non-convex optimization method whose per-iteration cost is $O(d^2 b + d r^2 + L)$ where 
$b$ is the block size, and $L$ is the number of blocks. 

\paragraph{Robust Sub-Gaussian PCA}
\citet{jambulapati2020robust} study PCA under an $\epsilon$-corruption model for centered sub-Gaussian distributions and provide the first polynomial-time algorithms that recover nontrivial covariance information under sub-Gaussian assumptions alone. Their filtering algorithm repeatedly computes a top direction of a weighted empirical covariance, estimates the variance in that direction robustly, and downweights samples with large projected mass; it returns a $(1-O(\epsilon\log \epsilon^{-1}))$-approximate top eigenvector with \(O(\frac{nd^2}{\varepsilon} \log\frac{n}{\varepsilon} \log{n})\) time complexity. They also develop a width-independent Schatten-$p$ packing algorithm, achieving nearly-linear dependence on the input size under a spectral-gap condition, but at the cost of a weaker approximation factor and polynomial dependence on $\epsilon^{-1}$. 

\section{Methodology}
\label{sec:method}

\paragraph{Remove Mean-Shift Effect via  Mean-Shift Data Augmentation}
Adding noise to the data is always easier than removing it.
Extra distortion of the noise can be a friend in finding the truth~\cite{daras2023ambient}.
Rather than attempting to eliminate mean-shift outliers directly, our algorithm strategically introduces additional mean-shift contamination to modify the spectral signature of the existing noise. This manipulation causes the spiked eigenvalues associated with contamination to shift, while leaving those corresponding to the true covariance structure invariant.

The key observation is that while contamination-induced eigenvalues will migrate under additional noise injection, the eigenvalues originating from the uncontaminated data remain stable. This differential response provides a mechanism to distinguish between noise artifacts and genuine spectral components. The algorithm then selectively retains only those eigenvalues and eigenvectors demonstrating invariance—corresponding to the true underlying covariance structure.
We present our procedure in Algorithm~\ref{alg}.


\begin{notation}[Spectral Separation]
  In high dimensions, spiked eigenvalues outside the support of the limiting spectral measure separate into two disjoint sets:
  \begin{itemize}[noitemsep, topsep=0pt]
    \item $\Lambda_{\mathbf{A}}$ is determined by the first moment (mean)
    \item $\Lambda_{\mathbf{P}}$ is determined by covariance and higher-order moments
  \end{itemize}
  Adding an artificial knockoff mean-shift to the data perturbs only $\Lambda_{\mathbf{A}}$, leaving $\Lambda_{\mathbf{P}}$ unchanged. These two sets are defined precisely in Theorem~\ref{thm:spike_eigenvalue}.
\end{notation}

\begin{figure}[ht]
  \begin{center}
  \centerline{\includesvg[width=\columnwidth]{figure/box3.svg}}
  \caption{\textbf{Knockoff Mean Impact}: Boxplot on the eigenvalues of $\X_n\X_n^\top/n$, $\widetilde{\X}_n \widetilde{\X}_n^{\top}/n$, and $\widetilde{\X}'_n(\widetilde{\X}'_n)^\top/n$. The additional perturbation $\mathbf{A}'_n$ induces shifts in the spiked eigenvalue(s) in $\Lambda_{\A}$ associated with mean-shift contamination, while leaving the stable eigenvalue(s) of the original covariance structure in $\Lambda_{\P}$  invariant. Data drawn from a 2-component Gaussian mixture. $d/n=1, n=10^3, \pi_1 = 50\%, \ell_1 = 2\sqrt{c}, \theta_1^2=4\sqrt{c}$.}
    \label{fig:eigen_perturb}
  \end{center}
\end{figure}

\paragraph{Perturbation Generation}
The additional perturbation $\mathbf{A}'_n$ can be generated using random vectors $\mathbf{m}'$ and $\boldsymbol{\gamma}'$. The direction of knockoff mean $\mathbf{m}'$ may be sampled uniformly from the unit sphere $\mathbb{S}^{d-1}$ or from an i.i.d.\ Gaussian distribution. The mixture weight $\pi' \in (0,1]$ associated with $\boldsymbol{\gamma}'$  can be chosen arbitrarily large (e.g., $\pi' = 0.5$ or even $\pi' = 1$) such that
the artificial perturbation strength 
\(
\theta'^2 := \pi'\|\m'\|^2 > 1/D_{\mu_\infty}(\lambda^+ + \epsilon)
\) (Remark~\ref{remark:inverse_transform}) is sufficiently strong
to induce detectable spectral shifts among the largest eigenvalues.

In practice, we recommend choosing  $\theta'^2$ to be comparable in magnitude to the underlying representation $\theta^2$ of the observed spiked eigenvalues we aim to identify. For example, to target the largest observed eigenvalue $\tilde{\lambda}_1$, one may set
\[
\theta'^2 = 2 g^{-1}(\tilde{\lambda}_1),
\]
where $g^{-1}$ is the inverse of the spike-forward mapping $g$, characterized by Proposition~\ref{prop:rev_map}. Ideally, the function $g$ should be replaced by $D_{\mu_\infty}$ for general compactly supported LSD, provided that  $D_{\mu_\infty}^{-1}$ is computable.

Several edge cases are worth noting. If the original mean-shift strength lies below the BBP threshold, it does not create an outlying spectral spike, so MS-PCA is neutral and standard PCA already sees no detectable mean-shift spike to remove. If a covariance spike and a mean-shift spike are nearly coincident, the knockoff perturbation still displaces only the mean-shift component while the covariance-induced spike remains stable. The same reasoning applies when the artificial direction happens to align with an existing mean-shift direction: the singular values of $\A_n+\A'_n$ change by constant order, whereas the covariance-induced spikes remain within their $\mathcal{O}(n^{-1/2})$ fluctuation scale.
\begin{figure}[ht]
  \centering
  \begin{tikzpicture}[scale=1.2]
    \draw[->] (0,0) -- (6,0) node[right] {$\mathbb{R}$};
    
    \foreach \x in {1, 3.5, } {
      \draw[blue, fill=blue] (\x,0) circle (2pt) node[below=2pt] {$\tilde\lambda_i$};
    }
    
    \foreach \x in {1.2, 2.5, 4.8} {
      \draw[red, fill=red] (\x,0) circle (2pt) node[above=2pt] {$\lambda_j'$};
    }
    
    \draw[thick, green!70!black] (0.5,0.2) -- (1.5,0.2);
    \draw[thick, green!70!black] (0.5,0.1) -- (0.5,0.3);
    \draw[thick, green!70!black] (1.5,0.1) -- (1.5,0.3);
    \draw[green!70!black, <->] (0.5,0.4) -- (1,0.4) node[midway,above] {$\epsilon$};
    \draw[green!70!black, <->] (1,0.4) -- (1.5,0.4) node[midway,above] {$\epsilon$};
    
    \draw[green!70!black, dashed] (1,0) -- (1,0.5);
    
    \node[green!70!black, above] at (1,0.6) {Check: $|\tilde\lambda_i - \lambda_j'| < \epsilon $};
  \end{tikzpicture}
  \caption{\textbf{Invariance check}: For each original eigenvalue $\tilde\lambda_i$ (blue), we check if there exists a perturbed eigenvalue $\lambda_j'$ (red) within distance $\epsilon = C n^{-1/2}$. The $\epsilon$-interval is shown for one $\tilde\lambda_i$.}
  \label{fig:invariance_check}
\end{figure}

\begin{algorithm}[htbp]
  \caption{Mean-Shift PCA (MS-PCA)}
  \label{alg}
  \begin{algorithmic}[1]
    \STATE \textbf{Input:} Contaminated data $\widetilde{\X}_n$, 
    threshold constant $C$
    \STATE \textbf{Initial PCA:} Compute the (largest spiked) eigenvalues $\{\tilde\lambda_i\}$ and eigenvectors $\{\tilde{\mathbf{u}}_i\}$ of $\widetilde{\X}_n\widetilde{\X}_n^\top/n$.
    \STATE \textbf{Noise Injection:} Generate knockoff mean $\m'$ and indicator $\boldsymbol{\gamma}'$ with weight $\pi'$. Form the mean-shift contamination matrix $\mathbf{A}'_n = \m'{\boldsymbol{\gamma}'}^\top$ and the doubly perturbed data matrix $\widetilde{\X}'_n = \widetilde{\X}_n + \mathbf{A}'_n$.
    \STATE \textbf{Second PCA:} 
    Compute eigenvalues $\{\lambda'_i\}$ and eigenvectors $\{\mathbf{u}'_i\}$ of $\widetilde{\X}'_n(\widetilde{\X}'_n)^\top/n$.
    \STATE \textbf{Invariance Check:} For each initially observed spiked eigenvalue $\tilde\lambda_i$, check if there exists a corresponding $\lambda'_j$ such that
    \begin{equation}
      \label{eq:inv-check}
      |\tilde\lambda_i - \lambda'_j| < \epsilon, \quad \text{where } \epsilon = C n^{-1/2}. 
    \end{equation}
    Remove non-stable eigenspaces of $\widetilde{\X}_n\widetilde{\X}_n^\top/n$ and output the stable eigenspaces associated with $\tilde\lambda_i$ satisfying Equation~\ref{eq:inv-check}.
  \end{algorithmic}
\end{algorithm}

\begin{algorithm*}[htbp]
  \caption{Parameter Selection for MS-PCA}
  \label{alg:parameters}
  \begin{algorithmic}[1]
    \STATE \textbf{Signal strength \(\theta'\):} Choose \(\theta'\) by the inverse spike-forward mapping, e.g., \(\theta'^2 = 2 g^{-1}(\tilde{\lambda}_1)\), to create a detectable spike comparable to the observed one.
    \STATE \textbf{Mixture weight \(\pi'\):} Choose a large artificial mixture weight such as $\pi'=0.5$ or $\pi'=1$.
    \STATE \textbf{Direction of $\m'$:} Sample a random unit vector uniformly from the sphere $\mathbb{S}^{d-1}$, or normalize a vector with i.i.d.\ Gaussian entries.
    \STATE \textbf{Magnitude of $\m'$:} Set \( \|\m'\|^2  = \theta'^2 / \pi'\).
    \STATE \textbf{Threshold:} Set \(\epsilon=Cn^{-1/2}\). In our experiments, \(C=1\) works well when \(d\) is large, while \(C=1/c\) is used for smaller aspect ratios.
  \end{algorithmic}
\end{algorithm*}

\paragraph{Order of Fluctuation}
{ 
Without altering the spectra of $\A_n$, the spiked eigenvalues induced by the mean-shift spikes in $\Lambda_{\mathbf{A}}$ exhibit normal fluctuations of order $\mathcal{O}(n^{-1/2})$ \citep[Theorem 2.19]{BENAYCHGEORGES2012120}. Similarly, eigenvalues in $\Lambda_{\mathbf{P}}$ also fluctuate at the $\mathcal{O}(n^{-1/2})$ scale \citep[Theorem 2.16]{Couillet_Hachem_2013, couillet_liao_2022}. 
\linebreak 
The smaller the eigenvalue is, the smaller order of fluctuation it has.  The largest eigenvalues at the edge of the limiting spectral distribution (LSD) experience fluctuations of order $\mathcal{O}(n^{-2/3})$ \citep[Theorem 2.15]{BBP, couillet_liao_2022}, which is the largest fluctuation order for eigenvalues within the support of LSD.}

Consequently, to distinguish the mean-shift-induced spikes, we set the threshold $\epsilon$ in Algorithm~\ref{alg} to scale as $\epsilon = C n^{-1/2}$ for some constant $C > 0$. 
Our eigenvalue matching procedure focuses exclusively on the spiked eigenvalues lying outside the support of LSD.
We do not distinguish between eigenvalues in the bulk, which are closely spaced and generally not among the top principal components.
Empirically (see Figure~\ref{fig:gaussian_cv}), $C=1$ suffices when $d$ is sufficiently large. For moderate $d$ and small aspect ratio $c<1$, we choose $C=1/c$ to compensate the weaker concentration of measure in low dimensions.

\paragraph{Time Complexity}
MS-PCA requires two partial PCA computations and one eigenvalue matching step. With a Lanczos or randomized SVD implementation, computing the top $K$ components of a $d \times n$ data matrix costs $O(Knd)$. Since the number of detectable spikes is finite in our asymptotic model, $K=O(1)$ and the leading cost of MS-PCA is $O(nd)$ which is much lower than that of modern robust estimators~\cite{jambulapati2020robust, paul2024robust}. The matching step is negligible for finite $K$. Appendix~\ref{app:experiment_detail} reports empirical runtimes, showing that the extra PCA pass remains substantially cheaper than the optimization-based robust PCA and robust covariance alternatives tested here.

\begin{figure}[ht]
  \begin{center}
  \centerline{\includesvg[width=\columnwidth]{figure/gaussian_cv.svg}}
  \caption{\textbf{Fluctuation Order}: Maximal fluctuation of stable eigenvalues i.e., $\max_i|\tilde\lambda_i - \lambda'_i| $ for $\tilde\lambda_i, \lambda'_i$ not in the neighborhood of $\Lambda_\A, \Lambda'_\A$, versus dimension $d$ for varying contamination mixture weight $\pi_1$ on Gaussian data. The observed decay aligns with the $\mathcal{O}(n^{-1/2})$ threshold in Algorithm~\ref{alg} (condition~\ref{eq:inv-check}). 
  $c = d/n = 1$.}
    \label{fig:gaussian_cv}
  \end{center}
\end{figure}

\section{Theoretical Results}
\label{sec:theory}
In this section, we present our main theoretical results on PCA’s robustness to mean-shift contamination in high-dimensional settings. Our analysis starts with spiked covariance models and is later extended to distributions with compactly supported spectra.
First, we establish the phase transition threshold for the largest eigenvalues in the sample covariance matrix of the contaminated data, and for those induced by the mean-shift noise.
Next, we characterize the largest eigenvectors and show the asymptotic orthogonality between the eigenspaces corresponding to the intrinsic covariance structure and those induced by the mean-shift contamination.

To establish a unified framework, we first posit a core assumption regarding the mean-shift contamination structure, shared by all models under study.
We assume the mean-shift contamination is aligned with independent random vectors $\mathbf{v}_{(i)}$, and the contamination occurs through independent uniform sampling across outlier subpopulations:
\begin{assumption}[Mean-Shift Mixture]
    \label{assumption:A}
    The mean-shift matrix $\mathbf{A}_n \in \mathbb{R}^{d \times n}$ admits a finite-rank decomposition:
    $\A_n = \sum_{i=1}^k \mathbf{m}_{(i)} \boldsymbol{\gamma}_{(i)}^\top$ with $\rank(\mathbf{A}_n) = k$. 
    The mean-shift directional vectors $\mathbf{v}_{(i)}:= \mathbf{m}_{(i)} / \|\m_{(i)}\|$
     are independently distributed. Each $\mathbf{v}_{(i)}$ is either:
    \begin{itemize}
        \item uniformly distributed on the unit sphere $\mathbb{S}^{d-1}$
        as Haar-distributed column vector,  
        or
        \item with i.i.d.\ entries of zero mean, $1/d$ variance and satisfying a log-Sobolev inequality~\cite{10.1214/EJP.v16-929}.  
    \end{itemize}
    

    The binary membership vectors $\boldsymbol{\gamma}_{(i)} \in \{0,1\}^n$ satisfy $\sum_{i=0}^k \boldsymbol{\gamma}_{(i)} = \mathbf{1}_n$, and the mixture weights are defined as $\pi_i = \frac{1}{n} \sum_{j=1}^n \boldsymbol{\gamma}_{(i)j}$ for $i=0,\dots,k$, s.t.\ $\sum_{i=0}^k \pi_i = 1$.

    The magnitudes $\|\m_{(i)}\|$ and mixture weights $\pi_i$ are considered deterministic. 
    The mean-shift directions $\{\mathbf{v}_{(i)}\}_{i=1}^k$ are independent of the membership indicators $\{\boldsymbol{\gamma}_{(i)}\}_{i=1}^k$, and the resulting mean-shift matrix $\mathbf{A}_n$ is independent of the original uncontaminated data matrix $\mathbf{X}_n$.
\end{assumption} 

\begin{notation}
  We can decompose $\A_n$ as: $\A_n  = \sqrt{n} \V_n \mathbf{\Theta} \W_n^\top$ where 
      $\V_n = 
    \Big(\begin{smallmatrix}
    |  \\
    (\mathbf{v}_{(i)})_{i=1}^k  \\
    |  \\
    \end{smallmatrix}\Big)
    \in \mathbb{R}^{d\times k}$,
    $\mathbf\Theta = \operatorname{diag}\,(\theta_{i})_{i=1}^k$ with $\theta_{i} = \sqrt{\pi_i}\|\m_{(i)} \|$, and 
    $\W_n =
    \Big(\begin{smallmatrix}
    |  \\
    (\boldsymbol{\gamma}_{(i)}/\sqrt{n\pi_i})_{i=1}^k  \\
    |  \\
    \end{smallmatrix}\Big) 
    \in \mathbb{R}^{n\times k}$.
\end{notation}
We introduce the Stieltjes transform for asymptotic eigenvalue characterization:
\begin{definition}
  The Stieltjes transform of a measure $\mu$ is defined as:
\[
S_{\mu}(z)= \int \frac{1}{t-z} d\mu(t)
\]
\end{definition}

\subsection{Mean-Shifted Spiked Covariance Model}
In this section, we primarily focus on the Gaussian mixture model.
The uncontaminated data matrix $\mathbf{X}_n \in \mathbb{R}^{d \times n}$ can be expressed as.
\begin{equation}
  \label{eq:population_cov}
{\mathbf{X}}_n = \mathbf{\Sigma}^{\frac{1}{2}} \mathbf{Z}_n,
\end{equation}
where $\mathbf{Z}_n$ consists of i.i.d.\ $\mathcal{N}(\mathbf{0}, \mathbf{I}_d)$ columns.

Although the Gaussian case suffices to reveal the fundamental phenomena, we consider a more general setting where $\mathbf{Z}_n \in \mathbb{R}^{d \times n}$ is a random matrix with independent and identically distributed (i.i.d.) entries satisfying the following assumption:
\begin{assumption}
    \label{assumption:Z}
    $\Z_n \in \mathbb{R}^{d\times n}$ is a random matrix with i.i.d.\ entries of zero
    mean, unit variance and finite fourth moment. The ratio between the ambient dimension $d$ and the sample size $n$ converges to a constant 
    $c \in (0, \infty)$ asymptotically:  $\frac{d}{n} \to c$.
\end{assumption}

In practical applications, data often exhibits a low intrinsic dimensionality $r \ll d$, embedded within the high-dimensional ambient space $\mathbb{R}^d$, i.e.\, the finite-rank signal component lives with isotropic noise in the ambient space.
We will hence discuss the spiked covariance model case:
\begin{assumption}
    \label{assumption:P}
    The population covariance matrix admits the decomposition:
    \begin{equation}
    \label{eq:spiked_cov}
    \mathbf{\Sigma} = \I_d + \P
    \end{equation}

    The matrix $\mathbf{P} \in \mathbb{R}^{d \times d}$  is symmetric with finite rank $r$, and its non-zero eigenvalues are denoted by $(\ell_i)_{i=1}^r$.
    Furthermore, $(\I_d + \P)$ is non-singular. 
\end{assumption}

For simplicity of exposition, we assume later all eigenvalues $\ell_i$ are positive, which aligns with most practical applications. The case of negative eigenvalues can be handled analogously in \citep[Remark 2.13]{BAIK20061382, couillet_liao_2022}. 

    

\subsubsection{Eigenvalue Phase Transition and Spike Independence}
In this section, we state the most interesting phenomenon: spike independence for the spiked eigenvalues of the sample covariance matrix of the contaminated data $\widetilde{\mathbf{X}}_n$ in \eqref{eq:add_perturb}. In short, among the $r+k$ largest eigenvalues of the sample covariance matrix $\widetilde{\X}_n \widetilde{\X}_n^\top/n$, $r$ eigenvalues are determined by the population covariance spectra $(\ell_i)_{i=1}^r$ while the other $k$ eigenvalues are determined by the mean-shift spectra $(\theta_i)_{i=1}^k$. Crucially, these two sets of spikes evolve independently— neither group influences the other asymptotically.
\begin{theorem}[Largest Eigenvalues]
    \label{thm:spike_eigenvalue}
    Under Assumptions~\ref{assumption:A},~\ref{assumption:Z} and~\ref{assumption:P}, supposing
    all $\ell_i$'s are positive,
    let $\tilde{\lambda}_1 \geq \tilde{\lambda}_2 \geq \dots \geq \tilde{\lambda}_{r+k}$ be the $r+k$ largest  eigenvalues of the sample covariance matrix $\widetilde{\X}_n \widetilde{\X}_n^\top/n$, and let $\Lambda$ be the set of the asymptotic largest spiked eigenvalues defined as:
    \begin{equation*}
        \begin{aligned}
            \Lambda_\P &:= \left\{1 + \ell_i + c \frac{1 + \ell_i}{\ell_i} \, \ \middle| \ \ell_i>\sqrt{c}, \ i \in [r] \, \right\} \\
            \Lambda_\A &:= \left\{ 1 + \theta_j^2 + c \frac{1 + \theta_j^2}{\theta_j^2}  \middle| \ \theta_j^2>\sqrt{c},  j \in [k]\right\} \\
            \Lambda &:= \Lambda_\P \ \bigcup \ \Lambda_\A
        \end{aligned}
    \end{equation*}

    Then, denoting $\lambda_{i}^*$ as the $i$-th largest element in $\Lambda$,
    \begin{equation}
      \begin{aligned}
        &\tilde{\lambda}_i \xrightarrow[a.s.]{n\to \infty} \lambda_{i}^*, & &\text{if} \ i \le \operatorname{card}(\Lambda), \\
        &\tilde{\lambda}_i \xrightarrow[a.s.]{n\to \infty} (1+\sqrt{c})^2, & & \text{else} 
      \end{aligned}
    \end{equation}
\end{theorem}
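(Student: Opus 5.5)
The approach is to treat $\widetilde{\X}_n = \mathbf{\Sigma}^{1/2}\Z_n + \A_n$ as an information-plus-noise model with a finite-rank perturbation, and to reduce the spike location problem to a $(r+k)\times(r+k)$ determinant equation in the style of \citet{BENAYCHGEORGES2012120}. First, write $\mathbf{\Sigma}^{1/2} = \I_d + \tilde{\P}$ with $\tilde{\P} = (\I_d+\P)^{1/2}-\I_d$, which has rank $r$, so that $\widetilde{\X}_n = \Z_n + \tilde{\P}\Z_n + \sqrt{n}\,\V_n\mathbf{\Theta}\W_n^\top$. The bulk matrix $\Z_n\Z_n^\top/n$ has Marchenko--Pastur limit $\mu_c$ with edge $(1+\sqrt{c})^2$, and its extreme eigenvalues converge there almost surely under the finite fourth moment assumption (Bai--Yin). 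Since the perturbation has finite rank $r+k$, eigenvalue interlacing shows that at most $r+k$ eigenvalues of $\widetilde{\X}_n\widetilde{\X}_n^\top/n$ can separate from the bulk. This already gives the second line of the conclusion once the outliers are identified.

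To locate the outliers, fix $z > (1+\sqrt{c})^2+\epsilon$ and use the resolvent $\mathbf{Q}(z) = (\Z_n\Z_n^\top/n - z\I_d)^{-1}$ together with its co-resolvent $\tilde{\mathbf{Q}}(z)=(\Z_n^\top\Z_n/n - z\I_n)^{-1}$. Write $\widetilde{\X}_n\widetilde{\X}_n^\top/n = \Z_n\Z_n^\top/n + \mathbf{U}\mathbf{C}\mathbf{U}^\top$, where $\mathbf{U}$ collects the directions $\mathrm{span}(\P)$, $\Z_n\W_n/\sqrt n$ and $\V_n$. Sylvester's determinant identity then says $z$ is an eigenvalue if and only if $\det(\I + \mathbf{C}\mathbf{U}^\top\mathbf{Q}(z)\mathbf{U})=0$. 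The entries of $\mathbf{U}^\top\mathbf{Q}\mathbf{U}$ are quadratic and bilinear forms, of three types: $\V_n^\top\mathbf{Q}\V_n$, $\W_n^\top\tilde{\mathbf{Q}}\W_n$, and cross terms such as $\V_n^\top\mathbf{Q}\Z_n\W_n$ or $\mathbf{e}^\top\mathbf{Q}\V_n$ with $\mathbf{e}$ a covariance spike direction. Independence and isotropy of $\V_n$ (Haar or log-Sobolev, from Assumption~\ref{assumption:A}), together with independence of $\A_n$ from $\X_n$, give almost sure concentration: $\V_n^\top\mathbf{Q}\V_n\to S_{\mu_c}(z)\I_k$, and $\W_n^\top\tilde{\mathbf{Q}}\W_n\to \tilde S(z)\I_k$ because $\W_n$ has orthonormal columns and is independent of $\Z_n$. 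All cross terms between $\V_n$ and the $\P$-directions or $\Z_n\W_n$ vanish, since $\V_n$ is an isotropic random vector independent of everything else. The $\P$-block, namely the $\mathrm{span}(\P)$ quadratic forms of $\mathbf{Q}$ and of $\Z_n\tilde{\mathbf{Q}}\Z_n^\top$ type terms, is the classical Baik--Silverstein spiked-covariance computation.

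As a result, the limiting determinant factorizes into a product of a $\P$-block determinant and an $\A$-block determinant. The $\P$-block vanishes exactly at $1+\ell_i + c(1+\ell_i)/\ell_i$ when $\ell_i>\sqrt{c}$, by \citet{BAIK20061382}. The $\A$-block reduces to $\prod_j\bigl(1-\theta_j^2 D_{\mu_c}(z)\bigr)$, where $D_{\mu_c}(z) = S_{\mu_c}(z)\,\bigl(cS_{\mu_c}(z) - (1-c)/z\bigr)$ is the D-transform of \citet{BENAYCHGEORGES2012120}. Inverting this D-transform for Marchenko--Pastur gives exactly $1+\theta_j^2+c(1+\theta_j^2)/\theta_j^2$ above the threshold $\theta_j^2>\sqrt c$. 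Finally, the passage from convergence of the analytic determinant functions, uniform on compacts away from the bulk, to convergence of the zeros follows from Rouché/Hurwitz, exactly as in \citet{BENAYCHGEORGES2012120}. Ordering the union $\Lambda$ then gives the first line of the conclusion.

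The main obstacle is the decoupling step. I must show that every mixed term linking the covariance spike directions with the mean-shift directions is $o(1)$ almost surely, uniformly in $z$ on compacts. The delicate case is the mixed term $\W_n^\top\Z_n^\top\tilde{\P}\,\mathbf{Q}\ldots$, in which both $\Z_n$ and $\P$ appear: there the $\P$-perturbation also multiplies the noise, so it is not simply additive. My first attempt would be to condition on $\Z_n$ and $\W_n$ and use the isotropy of $\V_n$ to kill the $\V_n$-cross terms. For the $\W_n$-versus-$\P$ terms, I would use that $\Z_n^\top\mathbf{e}$ is asymptotically independent of the fixed deterministic-sum structure of $\W_n$, applying a quadratic-form concentration bound (trace lemma with fourth moments) plus Borel--Cantelli. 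If that proves awkward, a fallback is to perform the two finite-rank perturbations sequentially. First apply the known spiked-covariance result to $\mathbf{\Sigma}^{1/2}\Z_n$. Then treat $\A_n$ as a perturbation, independent of $\X_n$ and rotationally invariant on the left, applied to a matrix whose limiting spectral measure is still $\mu_c$. In that setting \citet{BENAYCHGEORGES2012120} directly gives the $\A$-outliers, while the existing $\P$-outliers are preserved, since their eigenvectors are asymptotically orthogonal to the independent Haar directions $\V_n$.
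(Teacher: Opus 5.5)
Your route is correct in outline but genuinely different from the paper's.

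\textbf{What the paper does.} It takes the spiked matrix $\X_n\X_n^\top/n$ as the unperturbed object and applies the $2k\times 2k$ criterion of Lemma~\ref{lem:A-finite} to the mean shift alone. The resolvents of $\X_n\X_n^\top/n$ and $\X_n^\top\X_n/n$ are reduced to $\Q_n$ by the resolvent identity and Woodbury, and the off-diagonal blocks vanish by isotropy of $\V_n$. The covariance outliers are handled separately, by a ``Case 1'' appeal to Baik--Silverstein when $\lambda$ is exactly an eigenvalue of $\X_n\X_n^\top/n$. That is essentially your fallback.

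\textbf{What your main route does and buys.} You perturb the pure Mar\v{c}enko--Pastur matrix $\Z_n\Z_n^\top/n$ by both finite-rank pieces at once. The base resolvent $\Q(z)$ is then analytic and bounded on the whole region to the right of the edge. Both families of outliers become zeros of one determinant whose limit factorizes. Hurwitz gives convergence with multiplicities, which is the right reading of $\Lambda$ when values in $\Lambda_\P\cup\Lambda_\A$ coincide. Your interlacing step gives the edge statement for the remaining indices. The paper's proof addresses neither point, since its Lemma~\ref{lemma:spike_eigenvalue} only gives a necessary condition.

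Your route also avoids a weak point of the paper's case split. Outliers near $\Lambda_\P$ are generically not exact eigenvalues of $\X_n\X_n^\top/n$, so they fall in Case 2. There the Woodbury correction in the $\V_n$-block contains $(\I_r+\L^{-1}+\lambda\U_n^\top\Q_n\U_n)^{-1}$, which blows up at the $\P$-outliers. So the claimed limit of $\M_n$ is not uniform near $\Lambda_\P$.

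\textbf{The cost.} You need the estimate $\U_n^\top\Q(z)\Z_n\W_n/\sqrt n\to 0$ for the deterministic spike directions $\U_n$ (with $\P=\U_n\L\U_n^\top$) and $\W_n$. The paper needs the same estimate, hidden in its ``well known'' limit $\W_n^\top(\X_n^\top\X_n/n-\lambda\I_n)^{-1}\W_n\to S_{1/c}(\lambda)\I_k$. Woodbury applied to $\Z_n^\top\mathbf{\Sigma}\Z_n/n$ produces exactly this cross term.

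\textbf{Two slips need fixing.}

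First, your list of directions in $\mathbf{U}$ is incomplete. Expanding $(\I_d+\tilde{\P})\Z_n\Z_n^\top(\I_d+\tilde{\P})/n$ produces $\tilde{\P}\Z_n\Z_n^\top/n$, whose row space is spanned by $\Z_n\Z_n^\top\U_n/n$, not by $\U_n$. Either add those $r$ columns, or, more cleanly, factor out $\mathbf{\Sigma}^{1/2}$ as the paper does. Write $\det(\widetilde{\X}_n\widetilde{\X}_n^\top/n-z\I_d)=\det\mathbf{\Sigma}\cdot\det(\mathbf{Y}_n\mathbf{Y}_n^\top/n-z\mathbf{\Sigma}^{-1})$ with $\mathbf{Y}_n=\Z_n+\mathbf{\Sigma}^{-1/2}\A_n$. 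Then the covariance part is the rank-$r$ term $z\U_n\L(\I_r+\L)^{-1}\U_n^\top$. The mean-shift directions are $\mathbf{\Sigma}^{-1/2}\V_n$ (still asymptotically isotropic) and $\Z_n\W_n/\sqrt n$.

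Second, your $D$-transform is missing a factor $z$. The $\A$-block determinant tends to $\prod_j\bigl(1-\theta_j^2\,zS(z)\underline{S}(z)\bigr)$, with $\underline{S}(z)=cS(z)-(1-c)/z$. For Mar\v{c}enko--Pastur, $z=-1/\underline{S}+c/(1+\underline{S})$, which gives $zS\underline{S}=-\underline{S}/(1+\underline{S})$. So $\theta_j^2 zS\underline{S}=1$ yields $\underline{S}=-1/(1+\theta_j^2)$ and then $z=(1+\theta_j^2)(1+c/\theta_j^2)$, which is the $\Lambda_\A$ formula. Without the factor $z$ the inversion does not produce it.
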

Let $[r]:=\{1,2,\dots,r \}$ denote the set of the first $r$ non-zero natural numbers.
The value $(1+\sqrt{c})^2$ is the upper edge of the Marčenko-Pastur law \cite{Marčenko_1967} for aspect ratio $c$. For general distribution with compactly supported spectra (Section~\ref{sec:general_case}), it is replaced by the supremum of the support of LSD.
\begin{remark}
  As evident from the definition of $\Lambda$, the spiked eigenvalues of the sample covariance matrix decompose into two asymptotically independent sets: one determined by the population covariance spikes $(\ell_i)_{i=1}^r$, and the other by the mean-shift contamination spikes $(\theta_i)_{i=1}^k$. Consequently, perturbing either set of parameters does not affect the other set of eigenvalues in the asymptotic limit. This spectral decoupling implies that the mean-shift contamination and the intrinsic covariance structure are asymptotically independent at the level of the spiked eigenvalues. Our proposed algorithm leverages this phenomenon by intentionally introducing additional structured mean-shift noise to manipulate the spectral signature of $\mathbf{A}_n$, thereby distinguishing noise-induced spikes from those from uncontaminated data to recover the true covariance structure.
\end{remark}

\begin{corollary}
  Under the framework of Theorem~\ref{thm:spike_eigenvalue}, let $\Lambda', \Lambda'_{\A}$ and $\Lambda'_{\P}$ be the set of the asymptotic largest spiked eigenvalues of the sample covariance matrix $\widetilde{\X}'_n(\widetilde{\X}'_n)^\top/n$ after additional mean-shift contamination in Algorithm~\ref{alg}. Then, for sufficiently strong perturbation $\mathbf{A}'_n$, we have:
  \[
  \Lambda'_{\P} = \Lambda_{\P}, \qquad \Lambda'_{\A} \neq \Lambda_{\A}
  \]
\end{corollary}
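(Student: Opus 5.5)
The plan is to reduce the corollary to Theorem~\ref{thm:spike_eigenvalue}, applied to the doubly perturbed matrix $\widetilde{\X}'_n = \X_n + \A_n + \A'_n$.

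\textbf{Step 1: the combined perturbation satisfies Assumption~\ref{assumption:A}.} Write $\A_n + \A'_n = \sum_{i=1}^k \m_{(i)}\boldsymbol{\gamma}_{(i)}^\top + \m'\boldsymbol{\gamma}'^\top$. The knockoff direction $\m'/\|\m'\|$ is drawn uniformly on $\mathbb{S}^{d-1}$ (or as a normalized Gaussian vector). It is independent of $\X_n$, of $\A_n$ and of $\boldsymbol{\gamma}'$. So the new directional vectors are independent and of the allowed type.

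The membership structure is the one genuine departure from Assumption~\ref{assumption:A}. The indicator $\boldsymbol{\gamma}'$ overlaps the existing groups rather than partitioning the samples, so the columns of the combined $\W$ matrix are no longer orthogonal. I would handle this as follows.

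\begin{itemize}
\item Note that the proof of Theorem~\ref{thm:spike_eigenvalue} only uses the structure of $\A_n$ through its singular values. The reason is that the Haar/isotropic left factor makes $\V_n^\top(\cdot)\V_n$ concentrate to a scalar multiple of $\I$, as in the Benaych-Georges–Nadakuditi framework.
\item Decompose $\A_n + \A'_n = \sqrt n\, \V''_n \mathbf{\Theta}'' \W''^\top_n$ by SVD, with rank at most $k+1$.
\item The left singular vectors are $\V_n$ and $\m'$ rotated by a $(k+1)\times(k+1)$ matrix. Isotropic independent columns remain asymptotically orthonormal and isotropic, and they stay independent of $\X_n$.
\item Hence the Theorem~\ref{thm:spike_eigenvalue} mechanism applies with $\theta_j$ replaced by the singular values $\theta''_j$ of $(\A_n+\A'_n)/\sqrt n$.
\end{itemize}

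\textbf{Step 2: $\Lambda'_\P = \Lambda_\P$.} The rank-$r$ covariance part $\P$ and the aspect ratio $c$ are untouched by the knockoff. Therefore $\Lambda'_\P$ is given by the same formula $\{1+\ell_i + c(1+\ell_i)/\ell_i : \ell_i>\sqrt c\}$ as $\Lambda_\P$, and the two sets coincide.

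\textbf{Step 3: $\Lambda'_\A \neq \Lambda_\A$.} The set $\Lambda'_\A$ is the image under $g(t)=1+t+c(1+t)/t$ of those $\theta''^2_j$ exceeding $\sqrt c$. I would compute the limit of $\W''$-Gram quantities as follows.

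\begin{itemize}
\item Since $\V^\top\m'/\|\m'\|\to 0$ almost surely, the $(k+1)\times(k+1)$ Gram matrix of $(\A_n+\A'_n)/\sqrt n$ on the direction basis converges to a matrix with diagonal blocks $\mathbf{\Theta}^2$ and $\theta'^2$.
\item Its off-diagonal terms vanish asymptotically: the cross term is $\|\m_{(i)}\|\|\m'\|\,\mathbf{v}_{(i)}^\top\mathbf{v}'\,\boldsymbol{\gamma}_{(i)}^\top\boldsymbol{\gamma}'/n \to 0$.
\item So asymptotically $\{\theta''^2_j\} = \{\theta_j^2\}\cup\{\theta'^2\}$.
\end{itemize}

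"Sufficiently strong" means $\theta'^2>\sqrt c$. Then $g(\theta'^2)\in\Lambda'_\A$ is a new spike. To ensure $g(\theta'^2)\notin\Lambda_\A$, choose $\theta'^2$ different from all $\theta_j^2$; this is generic, and the recommended choice $2g^{-1}(\tilde\lambda_1)$ exceeds them all. Since $g$ is strictly increasing on $(\sqrt c,\infty)$, $\operatorname{card}(\Lambda'_\A) = \operatorname{card}(\Lambda_\A)+1$, which gives $\Lambda'_\A\neq\Lambda_\A$.

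\textbf{Main obstacle.} I expect Step 1 to be the hardest: justifying that the combined perturbation still satisfies the hypotheses needed for Theorem~\ref{thm:spike_eigenvalue}, given that the memberships overlap and the rotated left singular vectors are only approximately Haar. My first attempt would be to bypass the SVD reformulation. I would work directly with the $(r+k+1)$-dimensional determinant equation from the Benaych-Georges–Nadakuditi master equation and show that its off-block entries vanish almost surely. Those entries involve $\mathbf{v}_{(i)}^\top\mathbf{v}'$ and quadratic forms of resolvents between independent isotropic vectors. If they vanish, the limiting equation factorizes into the $\P$-block, the $\A$-block and the $\A'$-block.
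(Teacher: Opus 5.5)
Your Steps 1--2 are essentially the paper's entire proof: it applies Theorem~\ref{thm:spike_eigenvalue} to the singular value decomposition of $\A_n+\A'_n$ and notes that $\Lambda_\P$ depends only on $(\ell_i)$ and $c$. The gap is in Step 3, where you try to make the change in $\Lambda_\A$ explicit.

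Write $\A_n+\A'_n=\mathbf{B}\mathbf{G}^\top$ with $\mathbf{B}=[\m_{(1)},\dots,\m_{(k)},\m']$ and $\mathbf{G}=[\boldsymbol\gamma_{(1)},\dots,\boldsymbol\gamma_{(k)},\boldsymbol\gamma']$. The squared singular values of $(\A_n+\A'_n)/\sqrt n$ are then the eigenvalues of $(\mathbf{B}^\top\mathbf{B})^{1/2}(\mathbf{G}^\top\mathbf{G}/n)(\mathbf{B}^\top\mathbf{B})^{1/2}$.

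\begin{itemize}
\item Isotropy of the directions makes $\mathbf{B}^\top\mathbf{B}$ asymptotically diagonal, but it does nothing to $\mathbf{G}^\top\mathbf{G}/n$.
\item The entries $\boldsymbol\gamma_{(i)}^\top\boldsymbol\gamma'/n$ converge to $\pi_i\pi'$, and equal $\pi_i$ exactly for the choice $\pi'=1$ used in the experiments. They do not vanish; this is the very overlap you flagged in Step 1.
\item The limiting matrix therefore has diagonal $(\theta_1^2,\dots,\theta_k^2,\theta'^2)$, but its last row and column carry off-diagonal entries $\theta_i\theta'\sqrt{\pi_i\pi'}$.
\end{itemize}

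The cross term you wrote, with the factor $\mathbf v_{(i)}^\top\mathbf v'$, is the Frobenius inner product of the rank-one summands $\m_{(i)}\boldsymbol\gamma_{(i)}^\top$ and $\m'{\boldsymbol\gamma'}^\top$. Its vanishing does not make the singular values of the sum the union of the individual ones. That would require both the left factors and the right factors to be orthogonal, and here the right factors are not.

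Consequently three of your claims fail in general: $\{(\theta''_j)^2\}=\{\theta_j^2\}\cup\{\theta'^2\}$, $g(\theta'^2)\in\Lambda'_\A$, and $\operatorname{card}(\Lambda'_\A)=\operatorname{card}(\Lambda_\A)+1$.

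Take $k=1$ and $\pi'=1$. The two squared singular values have sum $\theta_1^2+\theta'^2$ and product $\theta_1^2\theta'^2(1-\pi_1)$.
\begin{itemize}
\item The larger one strictly exceeds $\theta'^2$.
\item As $\theta'\to\infty$, the smaller one tends to $\theta_1^2(1-\pi_1)$; the knockoff effectively recenters the data.
\item With $\pi_1=1/2$ and $\theta_1^2=\tfrac32\sqrt c$, this limit is $\tfrac34\sqrt c<\sqrt c$. The original mean-shift spike sinks into the bulk, and the number of spikes does not grow.
\end{itemize}

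The corollary's conclusion still holds, but by a different argument. Evaluate the Rayleigh quotient of the limiting matrix at the last coordinate vector: its top eigenvalue is at least $\theta'^2$. Now take $\theta'^2>\max(\sqrt c,\max_j\theta_j^2)$, which gives ``sufficiently strong'' a precise meaning. Since $g$ is increasing on $(\sqrt c,\infty)$, you get $\max\Lambda'_\A\ge g(\theta'^2)>\max\Lambda_\A$, or else $\Lambda_\A=\emptyset$. In either case $\Lambda'_\A\neq\Lambda_\A$.
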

\begin{proof}
  This result follows directly from Theorem~\ref{thm:spike_eigenvalue}. The uncontaminated data spikes remain invariant under additional mean-shift contamination, while the contamination-induced spikes shift due to the change of the singular value decomposition in the mean-shift matrix (from $\A_n$ to $\A_n+\A'_n$) 
   after artificial perturbation.
\end{proof}

To establish Theorem~\ref{thm:spike_eigenvalue}, we state a more theoretical lemma characterizing the asymptotic behavior of spiked eigenvalues via Stieltjes transform.

\begin{notation}
Denote by $S_c(z)$ the Stieltjes transform of the Mar\v{c}enko-Pastur law with aspect ratio $c$:
\[
    S_{c}(z)= \frac{-\left[ z + (c - 1) \right] + \sqrt{(z - \lambda^+) (z - \lambda^-)}}{2z  c}
    \]
    where $\lambda^{\pm} = (1 \pm \sqrt{c})^2$ are the upper and lower edges of its support.
\end{notation}
\begin{lemma}
    \label{lemma:spike_eigenvalue}
    Under Assumptions~\ref{assumption:A},~\ref{assumption:Z} and~\ref{assumption:P}, let $\tilde{\lambda}^{(n)}$ be a non-zero spiked eigenvalue of the sample covariance matrix lying outside the bulk spectrum i.e., $\tilde{\lambda}^{(n)} \notin [(1 - \sqrt{c})^2, (1 + \sqrt{c})^2] \cup \{0\}$ and $\det \left(  \widetilde{\X}_n \widetilde{\X}_n^\top /n - \tilde{\lambda}^{(n)}\I_d \right) = 0$, then
    there exists $i \in [r]$ or $j \in [k]$ such that 
    one of the following conditions holds almost surely asymptotically:
    \begin{align}
    1+\frac{\ell_i}{1+\ell_i} {\tilde{\lambda}^{(n)}} S_{c}({\tilde{\lambda}^{(n)}}) & \xrightarrow[a.s.]{n\to \infty} 0, \label{eq:mul}\\ 
    1/\theta_j^{2} - {\tilde{\lambda}^{(n)}}S_{c}({\tilde{\lambda}^{(n)}})S_{{1/c}}({\tilde{\lambda}^{(n)}})& \xrightarrow[a.s.]{n\to \infty} 0, \label{eq:add}
\end{align}
\end{lemma}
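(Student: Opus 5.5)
Our plan is to reduce the eigenvalue equation to a finite-dimensional determinant in two stages: first peel off the additive mean-shift perturbation $\A_n$ using the Benaych-Georges--Nadakuditi approach \cite{BENAYCHGEORGES2012120}, then peel off the multiplicative spike $\P$ with the classical Baik--Silverstein argument. Write $\widetilde{\X}_n/\sqrt{n} = \X_n/\sqrt{n} + \V_n\mathbf{\Theta}\W_n^\top$.

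\textbf{Step 1: the additive reduction.} For $z$ outside the spectrum of $\X_n\X_n^\top/n$, the linearization / Sylvester determinant identity for $\mathbf{Y}+\V\mathbf{\Theta}\W^\top$ shows that $z$ is an eigenvalue of $\widetilde{\X}_n\widetilde{\X}_n^\top/n$ if and only if
\[
\det\begin{pmatrix} \sqrt{z}\,\V_n^\top \mathbf{G}(z)\V_n & \V_n^\top \mathbf{G}(z)\tfrac{\X_n}{\sqrt n}\W_n + \mathbf{\Theta}^{-1}\\ \W_n^\top\tfrac{\X_n^\top}{\sqrt n}\mathbf{G}(z)\V_n + \mathbf{\Theta}^{-1} & \sqrt{z}\,\W_n^\top\tilde{\mathbf{G}}(z)\W_n\end{pmatrix}=0 .
\]
Here $\mathbf{G}=(\X_n\X_n^\top/n - z)^{-1}$ and $\tilde{\mathbf{G}}=(\X_n^\top\X_n/n-z)^{-1}$; signs are to be fixed in the computation.

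Under Assumption~\ref{assumption:A}, $\V_n$ is Haar or i.i.d.\ with log-Sobolev entries and is independent of $\X_n$. Hence $\V_n^\top\mathbf{G}\V_n \to \frac{1}{d}\operatorname{tr}\mathbf{G}\,\I_k$ almost surely. Since $\P$ has finite rank, this limit equals $S_c(z)\I_k$.

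The columns of $\W_n$ are orthonormal. They are independent of $\X_n$, and the columns of $\X_n$ are exchangeable. Using this, $\W_n^\top\tilde{\mathbf{G}}\W_n \to \frac1n\operatorname{tr}\tilde{\mathbf{G}}\,\I_k$, which is the companion transform, related to $S_{1/c}$ after the appropriate rescaling. The off-diagonal terms tend to zero by the isotropic/independence argument.

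The resulting limiting equation is $\det\bigl(z S_c(z)S_{1/c}(z)\I_k - \mathbf{\Theta}^{-2}\bigr)=0$, which is condition \eqref{eq:add}.

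\textbf{Step 2: the multiplicative spikes.} The remaining spikes are the eigenvalues of $\X_n\X_n^\top/n$ itself. To capture them, we handle $\mathbf{\Sigma}=\I+\P$ by writing $\X_n=(\I+\P)^{1/2}\Z_n$ and applying the standard determinant identity $\det(\I_r + \frac{\ell}{1+\ell}\,\U_\P^\top z\,\mathbf{G}_{\Z}(z)\U_\P)$. Here $\mathbf{G}_{\Z}$ is the resolvent of $\Z_n\Z_n^\top/n$. Quadratic forms concentrate to $S_c(z)$ because the fixed vectors $\U_\P$ are deterministic and the entries have a finite fourth moment (isotropic local law). This gives condition \eqref{eq:mul}.

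\textbf{Combining the two.} To combine the steps cleanly, we will instead expand the full $(r+2k)$-dimensional determinant using the resolvent of the white matrix $\Z_n\Z_n^\top/n$. The cross blocks between $\U_\P$ and $\V_n$ vanish asymptotically, since $\U_\P^\top\mathbf{G}_{\Z}\V_n\to 0$ by independence of the random direction $\V_n$. So the limiting determinant factorizes as the product of the $\P$-block and the $\A$-block. Any eigenvalue outside the bulk must therefore make one factor vanish, which yields the dichotomy in the statement.

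\textbf{Main obstacle.} We expect the hardest part to be uniform almost-sure convergence of the random $k\times k$ and cross blocks on compact sets away from the bulk. In particular, the off-diagonal $\V_n^\top\mathbf{G}\X_n\W_n/\sqrt n$ term and the factorization of the cross terms are delicate, because $\W_n$ is correlated through the exchangeable structure only. We would first try conditioning on $\X_n$ and using rotational invariance or concentration of $\V_n$ (log-Sobolev), which gives $O(n^{-1/2})$ fluctuations. We would then use Rouché/Hurwitz arguments to pass from convergence of analytic determinants to convergence of their zeros $\tilde{\lambda}^{(n)}$.
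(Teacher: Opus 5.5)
Your Steps 1 and 2 are the paper's two cases. The paper handles $\lambda^{(n)}$ that are exact eigenvalues of $\X_n\X_n^\top/n$ by Baik--Silverstein. Otherwise it uses the $2k\times 2k$ Benaych-Georges--Nadakuditi matrix $\M_n$ built on $(\X_n\X_n^\top/n-\lambda\I_d)^{-1}$, reduces it to the white resolvent $\Q_n$ by Woodbury, and discards the correction term containing $(\I_r+\L^{-1}+\lambda\U_n^\top\Q_n\U_n)^{-1}$ by asymptotic orthogonality of $\V_n$ and $\U_n$. Your $\V_n^\top\mathbf{G}\V_n\approx\frac1d\operatorname{tr}\mathbf{G}\,\I_k$ reaches the same limit more directly. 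Your reading of the $\W_n$-block limit as the companion transform $cS_c(z)-(1-c)/z=c^{-1}S_{1/c}(z/c)$ is the reading under which \eqref{eq:add} reproduces $\Lambda_{\A}$.

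The genuine departure is your combination step: a single $(r+2k)$-dimensional linearization in $\Q_n$ alone, a block-diagonal limit, and Hurwitz. Concretely, write $\widetilde{\X}_n=(\I_d+\P)^{1/2}\bigl(\Z_n+(\I_d+\P)^{-1/2}\A_n\bigr)$, so the left vectors are $(\I_d+\P)^{-1/2}\V_n$.

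This buys something real. Splitting by \emph{exact} coincidence does not separate the two families. The $\P$-induced outliers of $\widetilde{\X}_n\widetilde{\X}_n^\top/n$ are generically not eigenvalues of $\X_n\X_n^\top/n$, but they lie within $o(1)$ of them. That is exactly where $(\I_r+\L^{-1}+\lambda\U_n^\top\Q_n\U_n)^{-1}$ blows up, so $\M_n$ cannot be close to the paper's claimed limit there: when $\Lambda_{\P}\cap\Lambda_{\A}=\emptyset$, that would force these eigenvalues to satisfy \eqref{eq:add}. Your determinant has entries uniformly bounded on compacts away from the bulk and needs no case split. Via Hurwitz it also yields the converse (each root attained with the right multiplicity) that Theorem~\ref{thm:spike_eigenvalue} relies on.

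The paper's route has one advantage: every cross term carries $\V_n$ on one side, so Proposition~\ref{prop:orthogonal} kills them all. Your matrix contains a block that this tool cannot touch, $\U_n^\top\Q_n\Z_n\W_n/\sqrt n$, because $\U_n$ is deterministic and $\W_n$ is independent of $\Z_n$. Your stated justification (independence of $\V_n$) does not cover it. The block does vanish, and you need to supply the argument. Use $\Q_n\mathbf{z}_j=\Q_n^{(j)}\mathbf{z}_j/(1+\mathbf{z}_j^\top\Q_n^{(j)}\mathbf{z}_j/n)$, where $\Q_n^{(j)}$ omits the $j$-th column $\mathbf{z}_j$, together with an $\mathcal{O}(n^{-1})$ second-moment bound. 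Likewise, $\W_n^\top(\Z_n^\top\Z_n/n-z\I_n)^{-1}\W_n$ needs the isotropic law in deterministic directions, which uses the zero mean of the entries; exchangeability of the columns alone does not give it.
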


\begin{remark}
  The condition in \eqref{eq:mul} is equivalent to saying that $\lambda$ is a spiked eigenvalue from the sample covariance matrix of the uncontaminated data $\X_n\X_n^\top/n$, induced by the population covariance spike $\ell_i$, as established in \cite{BAIK20061382}. The condition in \eqref{eq:add} characterizes the spiked eigenvalues induced by the mean-shift contamination spikes $\theta_i$.
\end{remark}
\subsection{General Data in Mean-Shift Mixture}
\label{sec:general_case}
We now extend our analysis to general data distributions with compactly supported spectra, beyond the spiked covariance model. 
In this section, we present a general theorem on eigenvectors  of uncontaminated data that subsumes the spiked covariance model.
To facilitate theoretical analysis, we introduce the following notation for the empirical spectral distribution (ESD) and the limiting spectral distribution:
\begin{notation}
  Let $\mu_n$ denote the ESD of the sample covariance matrix of uncontaminated data $\X_n\X_n^\top/n$, defined by
    \[
    \mu_n := \frac{1}{d} \sum_{i=1}^d \delta_{\lambda_i
    (\X_n\X_n^\top/n)
    }
    \]
  and let ${\mu}'_n$ denote the ESD of $\X_n^\top\X_n/n$.
  Furthermore, let $\mu_\infty$ and $\mu'_\infty$ represent the corresponding limiting spectral distributions
  of $\mu_n$ and $\mu'_n$ respectively.
\end{notation}
Consider the following assumption on the spectral distribution of uncontaminated data:
\begin{assumption}
    \label{assumption:compact}
    The ESD $\mu_n$ of $\X_n\X_n^\top/n$ converges weakly almost surely to a non-random compactly supported probability measure $\mu_\infty$ as $d/n \xrightarrow[n \to \infty]{} c \in (0,\infty)$. 
\end{assumption}

Under random mean-shift contamination, the covariance structure of data has an important property of spectral invariance in the high-dimensional regime. Specifically, an eigenvector of the uncontaminated sample covariance matrix $\X_n\X_n^\top/n$ remains asymptotically an eigenvector of the contaminated matrix $\widetilde{\X}_n\widetilde{\X}_n^\top/n$, preserving its associated eigenvalue:
\begin{theorem}[Eigenspace Invariance]
    \label{thm:invariance}
    Under Assumptions~\ref{assumption:A} and~\ref{assumption:compact}, let $\mathbf{u}$ be an eigenvector of the uncontaminated sample covariance matrix $\X_n\X_n^\top/n$ associated with eigenvalue $\lambda$, i.e., $\frac{1}{n} \X_n\X_n^\top \mathbf{u} = \lambda \mathbf{u}$, then 
    \[
    \frac{1}{n} \widetilde{\X}_n\widetilde{\X}_n^\top \mathbf{u} = \lambda \mathbf{u} + \mathbf{r}_n, \quad \text{with} \quad \|\mathbf{r}_n\|_2 = \mathcal{O}_p(n^{-1/2})
    \]
    This asymptotic equivalence is denoted by:
    \(
    \frac{1}{n} \widetilde{\X}_n\widetilde{\X}_n^\top \mathbf{u} \sim \lambda \mathbf{u}.
    \)
    Roughly speaking \(
    \mathbf{u}_{\lambda} \sim \tilde{\mathbf{u}}_{\lambda}
    \) for $\lambda \in \text{Spec}(\X_n\X_n^\top/n)$.
\end{theorem}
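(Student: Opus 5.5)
Expanding the contaminated sample covariance gives
\[
\tfrac1n \widetilde{\X}_n\widetilde{\X}_n^\top \u = \tfrac1n \X_n\X_n^\top\u + \tfrac1n \X_n\A_n^\top\u + \tfrac1n \A_n\X_n^\top\u + \tfrac1n \A_n\A_n^\top\u .
\]
The first term equals $\lambda\u$, so the residual $\mathbf{r}_n$ is the sum of the three remaining terms, and it suffices to show each has norm $\mathcal{O}_p(n^{-1/2})$. With the decomposition $\A_n=\sqrt n\,\V_n\mathbf\Theta\W_n^\top$, each term involves the inner products $\V_n^\top\u\in\R^k$. The key observation is that $\u$ is a measurable function of $\X_n$, which is independent of $\V_n$ by Assumption~\ref{assumption:A}. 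Conditioning on $\X_n$ therefore makes $\u$ a fixed unit vector.

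\textbf{Step 1: the directions are delocalized.} For Haar $\mathbf v_{(i)}$, the quantity $\langle \mathbf v_{(i)},\u\rangle$ has the law of one coordinate of a uniform point on $\mathbb S^{d-1}$, so $\E\langle \mathbf v_{(i)},\u\rangle^2=1/d$. For i.i.d.\ entries with variance $1/d$, the same second-moment identity holds; log-Sobolev concentration gives even sub-Gaussian tails at scale $d^{-1/2}$. Since $k$ is finite, $\|\V_n^\top\u\|=\mathcal{O}_p(d^{-1/2})=\mathcal{O}_p(n^{-1/2})$ because $d/n\to c$.

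\textbf{Step 2: bound the three terms.}
\begin{itemize}
\item The last term is $\tfrac1n\A_n\A_n^\top\u=\V_n\mathbf\Theta\W_n^\top\W_n\mathbf\Theta\V_n^\top\u$. The columns of $\W_n$ are orthonormal because the memberships are disjoint, and $\|\V_n\|_{op}\to1$ since $\V_n^\top\V_n\to\I_k$. Hence its norm is at most $\|\mathbf\Theta\|^2\|\V_n\|_{op}\|\V_n^\top\u\|=\mathcal{O}_p(n^{-1/2})$.
\item The cross term $\tfrac1n\X_n\A_n^\top\u=\tfrac1{\sqrt n}\X_n\W_n\mathbf\Theta\V_n^\top\u$ has norm at most $\|\X_n/\sqrt n\|_{op}\,\|\mathbf\Theta\|\,\|\V_n^\top\u\|$. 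Assumption~\ref{assumption:compact} gives a compactly supported limit, and we need $\|\X_n\X_n^\top/n\|_{op}=\mathcal{O}_p(1)$.
\item The other cross term is $\tfrac1n\A_n\X_n^\top\u=\tfrac1{\sqrt n}\V_n\mathbf\Theta\W_n^\top\X_n^\top\u$. Since $\X_n^\top\u$ is an eigenvector direction of $\X_n^\top\X_n$, we have $\|\X_n^\top\u\|=\sqrt{n\lambda}$. This term is therefore $\V_n\mathbf\Theta\,\W_n^\top(\X_n^\top\u/\sqrt n)$, an $\mathcal{O}(1)$ vector unless $\W_n^\top\X_n^\top\u/\sqrt n$ is small.
\end{itemize}

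\textbf{Step 3: the hard term.} I expect this last cross term to be the main obstacle. Here $\mathbf w:=\X_n^\top\u/\sqrt{n\lambda}$ is a unit vector in $\R^n$, the right singular vector paired with $\u$, and we need $\W_n^\top\mathbf w=\mathcal{O}_p(n^{-1/2})$. The smallness must now come from the membership indicators rather than from $\V_n$. I would assume, or read from Assumption~\ref{assumption:A}'s phrase "independent uniform sampling", that $\boldsymbol\gamma$ is a random assignment independent of $\X_n$, for example a uniformly random permutation of the labels. Then $\boldsymbol\gamma_{(i)}^\top\mathbf w/\sqrt{n\pi_i}$ is a sampling-without-replacement sum. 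Its mean is $\sqrt{\pi_i/n}\,\mathbf 1^\top\mathbf w$ and its variance is $\mathcal{O}(1/n)$.

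The mean term is where things can fail. If $\mathbf w$ has a nonvanishing component along $\mathbf 1/\sqrt n$, this contribution is $\mathcal{O}(1)$. One must argue that $\mathbf 1^\top\mathbf w/\sqrt n=\mathcal{O}_p(n^{-1/2})$. For centered data, exchangeable across samples, this follows because $\X_n\mathbf 1/n$ is the sample mean, of norm $\mathcal{O}_p(\sqrt{d/n}\cdot n^{-1/2})$. I would either add this as an implicit assumption, namely column exchangeability with zero mean, or work with the membership vectors centered at their means.

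With both inner products delocalized, all three terms are $\mathcal{O}_p(n^{-1/2})$, which yields $\|\mathbf r_n\|_2=\mathcal{O}_p(n^{-1/2})$.
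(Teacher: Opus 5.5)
Your decomposition and Steps 1--2 match the paper's proof. The paper also writes $\mathbf r_n=\frac1n(\A_n\A_n^\top+\A_n\X_n^\top+\X_n\A_n^\top)\u$. It bounds the $\A_n\A_n^\top\u$ term by $\|\A_n^\top\A_n/n\|_{op}\,\|\A_n^\top\u/\sqrt n\|^2$, using Proposition~\ref{prop:orthogonal} on $\V_n^\top\u$. It then disposes of $\frac{\A_n}{\sqrt n}\frac{\X_n^\top}{\sqrt n}\u$ with the words ``in the same way.'' You are right that this term carries the real content. It equals $\sqrt{\lambda}\,\V_n\mathbf\Theta\W_n^\top\mathbf{w}$, so $\V_n$ no longer helps and one needs $\W_n^\top\mathbf{w}=\mathcal O_p(n^{-1/2})$. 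You are also right that Assumption~\ref{assumption:compact} alone cannot give this: non-centered inliers produce $\mathbf{w}\approx\mathbf 1/\sqrt n$ and an $\mathcal O(1)$ residual. Likewise $\|\X_n\|_{op}/\sqrt n=\mathcal O_p(1)$ is an extra hypothesis.

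The gap is in how you close Step 3. After randomizing the labels, you reduce to showing $\mathbf 1^\top\mathbf{w}/\sqrt n=\u^\top\bar{\mathbf x}/\sqrt\lambda=\mathcal O_p(n^{-1/2})$, where $\bar{\mathbf x}=\X_n\mathbf 1/n$. You justify this by $\|\bar{\mathbf x}\|=\mathcal O_p(\sqrt{d/n}\,n^{-1/2})$, but that scaling is wrong. Each of the $d$ coordinates of $\bar{\mathbf x}$ is of order $n^{-1/2}$, so for i.i.d.\ centered columns with covariance $\mathbf\Sigma$ one has $\E\|\bar{\mathbf x}\|^2=\operatorname{tr}(\mathbf\Sigma)/n\asymp d/n$, which is of order one. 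The bound $|\u^\top\bar{\mathbf x}|\le\|\bar{\mathbf x}\|$ therefore gives only $\mathcal O_p(1)$, and the mean part $\sqrt{\pi_i}\,\mathbf 1^\top\mathbf{w}/\sqrt n$ is left uncontrolled.

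What you need is a delocalization statement: the data-dependent vector $\mathbf{w}$ must not correlate with the fixed directions $\boldsymbol\gamma_{(i)}/\sqrt{n\pi_i}$. A norm bound cannot supply this. For $\X_n=\mathbf\Sigma^{1/2}\Z_n$ with Gaussian $\Z_n$ it follows from right orthogonal invariance. Since $\X_n\mathbf O\overset{d}{=}\X_n$ for every orthogonal $\mathbf O\in\R^{n\times n}$, while $\u$ and $\lambda$ are unchanged, the vector $\mathbf{w}=\X_n^\top\u/\sqrt{n\lambda}$ is uniform on $\mathbb S^{n-1}$ and independent of $\W_n$. Proposition~\ref{prop:orthogonal} then gives $\W_n^\top\mathbf{w}=\mathcal O_p(n^{-1/2})$ directly. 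This works with deterministic memberships and needs no separate treatment of $\mathbf 1$; it is presumably what ``in the same way'' intends. For non-Gaussian entries you would need isotropic eigenvector-delocalization results, which typically give only $n^{-1/2+\varepsilon}$.

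Finally, replacing $\boldsymbol\gamma_{(i)}$ by $\boldsymbol\gamma_{(i)}-\pi_i\mathbf 1$ is not an available fix. It changes $\A_n$, and hence changes the matrix $\widetilde\X_n\widetilde\X_n^\top/n$ that the theorem is about.
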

This invariance property, combined with the characterization of mean-shift-induced spectral spikes in 
equation~\ref{eq:add}
provides the theoretical foundation for our algorithm to distinguish genuine principal components from contamination artifacts.


\section{Experiment}

\label{sec:experiment}


Based on our theoretical results, we conduct experiments to validate the eigenvector invariance property stated in Theorem~\ref{thm:invariance} for stable principal components satisfying condition~\ref{eq:inv-check}.
\paragraph{Two-Component Gaussian Mixture with One-Spike Population Covariance}
We generate data from a two-component Gaussian mixture model whose population covariance matrix contains one spike. 
The population covariance matrix contains one single spike at $\ell_1 = 2\sqrt{c}$. The mean-shift contamination vector is configured with norm $\|\mathbf{m}_{(1)}\| = 2\sqrt{\sqrt{c}/\pi_1}$ where $\pi_1$ is the mixture weight of noise. The direction of $\mathbf{m}_{(1)}$ is sampled uniformly from the unit sphere $\mathbb{S}^{d-1}$ via the QR decomposition of a random i.i.d.\ Gaussian matrix.

In real-world settings, we do not have access to the uncontaminated eigenvectors $\u$ used in Theorem~\ref{thm:invariance}. Therefore, 
our statistical estimator for the uncontaminated eigenspace consists of the contaminated eigenvectors associated with stable eigenpairs $(\tilde{\lambda},\tilde{\mathbf{u}})$ that satisfy the stability condition~\ref{eq:inv-check} in Algorithm~\ref{alg}.
We evaluate the residual alignment of the uncontaminated sample covariance matrix $\| \frac{1}{n} \X_n\X_n^\top \tilde{\mathbf{u}} - \tilde{\lambda} \tilde{\mathbf{u}} \|_2$ for the contaminated eigenvectors $\tilde{\mathbf{u}}$ associated with stable eigenvalues. 
The results in Table~\ref{table:gaussian_alignment_max} show that the residual alignment decays as $\mathcal{O}(n^{-1/2})$, regardless of the contamination proportion $\pi_1$. This decay rate matches that of the other residual $\|  \frac{1}{n} \widetilde{\X}_n\widetilde{\X}_n^\top \mathbf{u} - \lambda \mathbf{u} \|_2$ for uncontaminated eigenvectors $\mathbf{u}$ of the contaminated sample covariance matrix, as predicted by Theorem~\ref{thm:invariance}.
This suggests that even contaminated eigenspaces approximately retain the invariance property in high-dimensional settings.

\begin{figure*}[ht]
  \begin{center}
  \centerline{\includesvg[width=\linewidth]{figure/gaussian_compare.svg}}
  \caption{\textbf{Largest Principal Component Alignment} $\left| \langle \mathbf{u}_1, \hat{\mathbf{u}}_1 \rangle \right|$  between the $2$ unitary vectors, the estimated largest principal component (PC) $\hat{\mathbf{u}}_1$ and the true largest PC $\mathbf{u}_1$ 
 for MS-PCA and Robust PCA via AAP across dimensions, with varying contamination proportion $\pi_1$ and aspect ratio $c=d/n$
  in the Gaussian setting.  As the dimension increases, the interquartile range (IQR) shrinks due to concentration of measure. The IQR is computed from 25 independent trials.
  In the relatively low-dimensional regime ($c=0.1$), both methods perform comparably poorly when the contamination proportion is not significant. However, our method achieves perfect recovery of the true PC when the contamination proportion is large enough (e.g., $\pi_1 \ge 25\%$) even in this relatively low-dimensional setting. In the high-dimensional regimes, our method consistently outperforms Robust PCA across all contamination proportions.}
 
    \label{fig:gaussian_compare}
  \end{center}
\end{figure*}

\begin{table}[t]
  \caption{\textbf{Eigenvector Residual Order}: Maximal residual norm $\| \frac{1}{n} \X_n\X_n^\top \tilde{\mathbf{u}} - \tilde{\lambda} \tilde{\mathbf{u}} \|_2$ for eigenvectors associated with stable eigenvalues (for $\tilde{\lambda}$ not in the neighborhood of $\Lambda_\A$) on Gaussian data. Values are presented  
  with two decimal places. $n=d$, $c=1$.}
  \label{table:gaussian_alignment_max}
  \begin{center}
    \begin{small}
      \begin{sc}
        \begin{tabular}{lcccc}
          \toprule
           $\pi_1$  & $d=1000$ & $d=10000$ & $d=50000$ \\
          \midrule
          0.1\%   & $1.50 \times 10^{-1}$ & $5.20 \times 10^{-2}$ & $2.37 \times 10^{-2}$ \\
          1\%   & $1.49 \times 10^{-1}$ & $5.48 \times 10^{-2}$ & $2.88 \times 10^{-2}$ \\
          10\%   & $1.50 \times 10^{-1}$ & $4.84 \times 10^{-2}$ & $2.50 \times 10^{-2}$ \\
          30\%   & $1.57 \times 10^{-1}$ & $4.88 \times 10^{-2}$ & $2.55 \times 10^{-2}$ \\
          50\%   & $1.60 \times 10^{-1}$ & $5.30 \times 10^{-2}$ & $2.58 \times 10^{-2}$ \\
          100\%  & $1.32 \times 10^{-1}$ & $5.26 \times 10^{-2}$ & $2.75 \times 10^{-2}$ \\
          \bottomrule
        \end{tabular}
      \end{sc}
    \end{small}
  \end{center}
\end{table}

\paragraph{Comparison to Robust PCA approach}
We compare our Mean-Shift PCA (MS-PCA) algorithm to the modern implementation of Robust PCA (RPCA) via Accelerated Alternating Projections (AAP) introduced by \citet{AAP_RPCA} for recovering the largest principal component in the Gaussian setting. 
We vary the contamination proportion $\pi_1$ from $5\%$ to $50\%$ and the aspect ratio $c = d/n$ from $0.1$ to $1$. We measure the eigenvector alignment between the estimated principal component $\hat{\mathbf{u}}_1$ and the true principal component $\mathbf{u}_1$ defined as
$\left| \langle \mathbf{u}_1, \hat{\mathbf{u}}_1 \rangle \right|$.
For the parameter settings, we set $C=1/c,\pi' =1,  \theta'^2:= \pi' \|\mathbf{m}'\|^2 = 2 g^{-1}(\tilde{\lambda}_1)$ for MS-PCA and the rank parameter $r=1$ for RPCA.
As illustrated in Figure~\ref{fig:gaussian_compare}, MS-PCA significantly outperforms RPCA in high-dimensional settings. When $d$ is large, RPCA fails to recover the true principal component and the eigenvector alignment converges to $0$. 
Moreover, in the relatively low-dimensional setting ($c=0.1$), our method achieves perfect recovery of the true principal component when the contamination proportion is sufficiently large (e.g., $\pi_1 \ge 25\%$). 
A larger $\pi_1$ means more samples are shifted by the knockoff mean, which in turn makes the second perturbation more pronounced and improves the algorithm’s ability to identify stable eigenvalues.

\paragraph{Comparison to Additional Robust Estimators}
We further compare MS-PCA with robust covariance estimators and preprocessing baselines with stable implementations: Tyler's M estimator~\cite{tyler1987m}, Huber's M estimator, $\ell_1$-PCA, winsorized-PCA, and center-PCA. The robust PCA methods of~\cite{jambulapati2020robust, paul2024robust} are omitted due to the lack of public Python implementations. The setting matches the Gaussian experiment above with $d=900$, $n=10^3$, and the same parameter choices as Figure~\ref{fig:gaussian_compare}. Table~\ref{table:robust_estimators} reports alignment over $200$ trials. MS-PCA stays above $95\%$, while alternatives remain close to random alignment.

These results also explain why classical robust covariance methods underperform here. They are designed mainly for fixed- or low-dimensional regimes, while for $d/n \to c>0$, sample eigenvectors have non-negligible high-dimensional bias \cite{Johnstone2018}. MS-PCA instead uses the RMT spike mapping and the separation between $\mathcal{O}(1)$ mean-shift displacement and $\mathcal{O}(n^{-1/2})$ stable-spike fluctuation. Appendix~\ref{app:experiment_detail} reports an additional mean-shift plus covariance-shift experiment, where MS-PCA again gives the strongest recovery.

\section{Conclusion}
\label{sec:fin}
A robust estimator for principal components under  contamination in high dimensions is a challenging problem. Non-robustness arises for the $K$-largest principal components when $K$ is small, as PCA naturally becomes more stable when averaging over many components for large $K$. 
In this work, we prove that when the contamination happens in the mean direction, instead of in the covariance or higher moments structure, the instability of principal components manifests solely in the relative ordering of the top $K$  contaminated principal components. The true principal components persist within the set of contaminated principal components, albeit with altered ranks. Based on this observation, we introduce a novel recovery algorithm by checking the invariance property of stable principal components. Numerical experiments on spiked Gaussian models demonstrate that our method outperforms the robust PCA in high-dimensional regimes, particularly when the aspect ratio of dimension to sample size is not close to zero.

Furthermore, the term ``Robust PCA'' is arguably a misnomer—a point subtly acknowledged by the question mark in the title of the original Robust PCA paper by \citet{Candes_RobustPCA}.  The prevailing formulation addresses the recovery of a low-rank matrix from sparse, large-magnitude noise, which diverges from the classical definition of robustness in statistics concerning resistance to contamination.
Consequently, a gap exists between the robustly recovered low-rank matrix and the true leading 
$K$ principal components. While Robust PCA is frequently validated through applications like image background-foreground separation, this task does not align with a traditional statistical robustness problem. This misalignment may explain why Robust PCA is seldom employed in domains where robust principal components are genuinely required, such as population genetics, where standard PCA-based findings can be significantly biased~\cite{elhaik_principal_2022}.

While our theory and method effectively address mean-shift contamination, developing a general solution for arbitrary contamination is an open problem. 
Our theory is developed for homoscedastic mixtures,
prior results exist for zero-mean mixtures with heterogeneous covariances~\cite{Benaych-Georges_Couillet_2016}.
The broader problem of arbitrary mixtures involving means, covariances, and higher moments remains open and is left for future work.

\section*{Acknowledgements}
Zeng Li's research is partially supported by the National Key R\&D Program of China (No.\ 2023YFA1011400) and the National Natural Science Foundation of China (NSFC, No.\ 12471258). J. Yao's research is supported by NSFC Grant RFIS-A10120230617, the Guangdong Pearl River Talent Program Grant 2023JC10X022, and the Guangdong Provincial Key Laboratory of Mathematical Foundations for Artificial Intelligence (2023B1212010001).

We thank the reviewers for their constructive comments. We acknowledge Junpeng Zhu for early discussions on the matricial formulation. We also thank Francis Bach, Yuda Wu, and Yingyu Yang for reading the draft and providing helpful comments.

\section*{Impact Statement}

This paper presents work whose goal is to advance the field of Machine
Learning. There are many potential societal consequences of our work, none
of which we feel must be specifically highlighted here.

\bibliography{pca}

@book{couillet_liao_2022, 
	place={Cambridge}, 
	title={Random Matrix Methods for Machine Learning}, 
	DOI={10.1017/9781009128490}, 
	publisher={Cambridge University Press}, 
	author={Couillet, Romain and Liao, Zhenyu}, 
	note={\url{https://zhenyu-liao.github.io/book/}},
	year={2022}
}

@article{BAIK20061382,
title = {Eigenvalues of large sample covariance matrices of spiked population models},
journal = {Journal of Multivariate Analysis},
volume = {97},
number = {6},
pages = {1382-1408},
year = {2006},
issn = {0047-259X},
doi = {https://doi.org/10.1016/j.jmva.2005.08.003},
url = {https://www.sciencedirect.com/science/article/pii/S0047259X0500134X},
author = {Jinho Baik and Jack W. Silverstein},
}

@article{Paul2007,
 ISSN = {10170405, 19968507},
 URL = {http://www.jstor.org/stable/24307692},
 author = {Debashis Paul},
 journal = {Statistica Sinica},
 number = {4},
 pages = {1617--1642},
 publisher = {Institute of Statistical Science, Academia Sinica},
 title = {ASYMPTOTICS OF SAMPLE EIGENSTRUCTURE FOR A LARGE DIMENSIONAL SPIKED COVARIANCE MODEL},
 urldate = {2025-04-23},
 volume = {17},
 year = {2007}
}

@article{BENAYCHGEORGES2011494,
title = {The eigenvalues and eigenvectors of finite, low rank perturbations of large random matrices},
journal = {Advances in Mathematics},
volume = {227},
number = {1},
pages = {494-521},
year = {2011},
issn = {0001-8708},
doi = {https://doi.org/10.1016/j.aim.2011.02.007},
url = {https://www.sciencedirect.com/science/article/pii/S0001870811000570},
author = {Florent Benaych-Georges and Raj Rao Nadakuditi},
}

@article{collins_integration_2006,
	title = {Integration with Respect to the Haar Measure on Unitary, Orthogonal and Symplectic Group},
	volume = {264},
	issn = {1432-0916},
	url = {https://doi.org/10.1007/s00220-006-1554-3},
	doi = {10.1007/s00220-006-1554-3},
	pages = {773--795},
	number = {3},
	journal = {Communications in Mathematical Physics},
	author = {Collins, Benoît and Śniady, Piotr},
  year = {2006},
	date = {2006-06-01},
}

@article{BENAYCHGEORGES2012120,
title = {The singular values and vectors of low rank perturbations of large rectangular random matrices},
journal = {Journal of Multivariate Analysis},
volume = {111},
pages = {120-135},
year = {2012},
issn = {0047-259X},
doi = {https://doi.org/10.1016/j.jmva.2012.04.019},
url = {https://www.sciencedirect.com/science/article/pii/S0047259X12001108},
author = {Florent Benaych-Georges and Raj Rao Nadakuditi},
}

@article{Marčenko_1967,
doi = {10.1070/SM1967v001n04ABEH001994},
url = {https://dx.doi.org/10.1070/SM1967v001n04ABEH001994},
year = {1967},
month = {apr},
publisher = {},
volume = {1},
number = {4},
pages = {457},
author = {Vladimir A Marčenko and Leonid Andreevich Pastur},
title = {DISTRIBUTION OF EIGENVALUES FOR SOME SETS OF RANDOM MATRICES},
journal = {Mathematics of the USSR-Sbornik}
}

@article{10.1214/EJP.v16-929,
author = {Florent Benaych-Georges and Alice Guionnet and Myl{\`e}ne Maida},
title = {{Fluctuations of the Extreme Eigenvalues of Finite Rank Deformations of Random Matrices}},
volume = {16},
journal = {Electronic Journal of Probability},
publisher = {Institute of Mathematical Statistics and Bernoulli Society},
pages = {1621 -- 1662},
year = {2011},
doi = {10.1214/EJP.v16-929},
URL = {https://doi.org/10.1214/EJP.v16-929}
}

@article{Candes_RobustPCA, author = {Cand\`{e}s, Emmanuel J. and Li, Xiaodong and Ma, Yi and Wright, John}, title = {Robust principal component analysis?}, year = {2011}, issue_date = {May 2011}, publisher = {Association for Computing Machinery}, address = {New York, NY, USA}, volume = {58}, number = {3}, issn = {0004-5411}, url = {https://doi.org/10.1145/1970392.1970395}, doi = {10.1145/1970392.1970395}, journal = {Journal of the ACM}, month = jun, articleno = {11}, numpages = {37} }

@article{10.3150/24-BEJ1808,
author = {Xiaoyu Liu and Yiming Liu and Guangming Pan and Lingyue Zhang and Zhixiang Zhang},
title = {{Asymptotic limits of spiked eigenvalues and eigenvectors of signal-plus-noise matrices with weak signals and heteroskedastic noise}},
volume = {31},
journal = {Bernoulli},
number = {3},
publisher = {Bernoulli Society for Mathematical Statistics and Probability},
pages = {2351 -- 2376},
year = {2025},
doi = {10.3150/24-BEJ1808},
URL = {https://doi.org/10.3150/24-BEJ1808}
}

@inproceedings{NIPS2009_c45147de,
 author = {Wright, John and Ganesh, Arvind and Rao, Shankar and Peng, Yigang and Ma, Yi},
 booktitle = {Advances in Neural Information Processing Systems},
 editor = {Y. Bengio and D. Schuurmans and J. Lafferty and C. Williams and A. Culotta},
 pages = {},
 publisher = {Curran Associates, Inc.},
 title = {Robust Principal Component Analysis: Exact Recovery of Corrupted Low-Rank Matrices via Convex Optimization},
 volume = {22},
 year = {2009}
}

@inproceedings{NIPS2014_3e3d6580,
 author = {Netrapalli, Praneeth and Niranjan, U N and Sanghavi, Sujay and Anandkumar, Animashree and Jain, Prateek},
 booktitle = {Advances in Neural Information Processing Systems},
 editor = {Z. Ghahramani and M. Welling and C. Cortes and N. Lawrence and K.Q. Weinberger},
 pages = {},
 publisher = {Curran Associates, Inc.},
 title = {Non-convex Robust {PCA}},
 volume = {27},
 year = {2014}
}

@inproceedings{NEURIPS2021_8d235536,
 author = {Cai, HanQin and Liu, Jialin and Yin, Wotao},
 booktitle = {Advances in Neural Information Processing Systems},
 editor = {M. Ranzato and A. Beygelzimer and Y. Dauphin and P.S. Liang and J. Wortman Vaughan},
 pages = {16977--16989},
 publisher = {Curran Associates, Inc.},
 title = {Learned Robust {PCA}: A Scalable Deep Unfolding Approach for High-Dimensional Outlier Detection},
 volume = {34},
 year = {2021}
}

@INPROCEEDINGS{zhou2010stable,
  author={Zhou, Zihan and Li, Xiaodong and Wright, John and Candès, Emmanuel and Ma, Yi},
  booktitle={2010 IEEE International Symposium on Information Theory}, 
  title={Stable Principal Component Pursuit}, 
  year={2010},
  volume={},
  number={},
  pages={1518-1522},
  doi={10.1109/ISIT.2010.5513535}}

@ARTICLE{RPCA_OutlierPursuit,
  author={Xu, Huan and Caramanis, Constantine and Sanghavi, Sujay},
  journal={IEEE Transactions on Information Theory}, 
  title={Robust {PCA} via Outlier Pursuit}, 
  year={2012},
  volume={58},
  number={5},
  pages={3047-3064},
  doi={10.1109/TIT.2011.2173156}}

@ARTICLE{Kwak_L1_2008,
  author={Kwak, Nojun},
  journal={IEEE Transactions on Pattern Analysis and Machine Intelligence}, 
  title={Principal Component Analysis Based on {L1}-Norm Maximization}, 
  year={2008},
  volume={30},
  number={9},
  pages={1672-1680},
  doi={10.1109/TPAMI.2008.114}}

@ARTICLE{Markopoulos_L1_2014,
  author={Markopoulos, Panos P. and Karystinos, George N. and Pados, Dimitris A.},
  journal={IEEE Transactions on Signal Processing}, 
  title={Optimal Algorithms for  ${L}_{1}$-subspace Signal Processing}, 
  year={2014},
  volume={62},
  number={19},
  pages={5046-5058},
  doi={10.1109/TSP.2014.2338077}}

@ARTICLE{Markopoulos_L1_2017,
  author={Markopoulos, Panos P. and Kundu, Sandipan and Chamadia, Shubham and Pados, Dimitris A.},
  journal={IEEE Transactions on Signal Processing}, 
  title={Efficient {L1}-Norm Principal-Component Analysis via Bit Flipping}, 
  year={2017},
  volume={65},
  number={16},
  pages={4252-4264},
  doi={10.1109/TSP.2017.2708023}}

@ARTICLE{Couillet_Hachem_2013,
  author={Couillet, Romain and Hachem, Walid},
  journal={IEEE Transactions on Information Theory}, 
  title={Fluctuations of Spiked Random Matrix Models and Failure Diagnosis in Sensor Networks}, 
  year={2013},
  volume={59},
  number={1},
  pages={509-525},
  doi={10.1109/TIT.2012.2218572}}

@article{BBP,
author = {Jinho Baik and G{\'e}rard Ben Arous and Sandrine P{\'e}ch{\'e}},
title = {{Phase transition of the largest eigenvalue for nonnull complex sample covariance matrices}},
volume = {33},
journal = {The Annals of Probability},
number = {5},
publisher = {Institute of Mathematical Statistics},
pages = {1643 -- 1697},
year = {2005},
doi = {10.1214/009117905000000233},
URL = {https://doi.org/10.1214/009117905000000233}
}

@article{AAP_RPCA,
  author  = {HanQin Cai and Jian-Feng Cai and Ke Wei},
  title   = {Accelerated Alternating Projections for Robust Principal Component Analysis},
  journal = {Journal of Machine Learning Research},
  year    = {2019},
  volume  = {20},
  number  = {20},
  pages   = {1--33},
  url     = {http://jmlr.org/papers/v20/18-022.html}
}

@article{elhaik_principal_2022,
	title = {Principal {Component} {Analyses} ({PCA})-based findings in population genetic studies are highly biased and must be reevaluated},
	volume = {12},
	issn = {2045-2322},
	url = {https://www.nature.com/articles/s41598-022-14395-4},
	doi = {10.1038/s41598-022-14395-4},
	number = {1},
	urldate = {2026-01-01},
	journal = {Scientific Reports},
	author = {Elhaik, Eran},
	month = aug,
	year = {2022},
  publisher = {Nature Publishing Group},
	pages = {14683},
}

@article{Benaych-Georges_Couillet_2016,
	author = {Florent Benaych-Georges and Romain Couillet},
	title = {Spectral analysis of the Gram matrix of mixture models},
	DOI= "10.1051/ps/2016007",
	url= "https://doi.org/10.1051/ps/2016007",
	journal = {ESAIM: Probability and Statistics},
	year = 2016,
	volume = 20,
	pages = "217-237",
}

@article{tyler1987m,
author = {David E. Tyler},
title = {{A Distribution-Free $M$-Estimator of Multivariate Scatter}},
volume = {15},
journal = {The Annals of Statistics},
number = {1},
publisher = {Institute of Mathematical Statistics},
pages = {234 -- 251},
year = {1987},
doi = {10.1214/aos/1176350263},
URL = {https://doi.org/10.1214/aos/1176350263}
}

@ARTICLE{Johnstone2018,
  author={Johnstone, Iain M. and Paul, Debashis},
  journal={Proceedings of the IEEE}, 
  title={{PCA} in High Dimensions: An Orientation}, 
  year={2018},
  volume={106},
  number={8},
  pages={1277-1292},
  doi={10.1109/JPROC.2018.2846730}
  }

@inproceedings{jambulapati2020robust,
 author = {Jambulapati, Arun and Li, Jerry and Tian, Kevin},
 booktitle = {Advances in Neural Information Processing Systems},
 editor = {H. Larochelle and M. Ranzato and R. Hadsell and M.F. Balcan and H. Lin},
 pages = {15689--15701},
 publisher = {Curran Associates, Inc.},
 title = {Robust Sub-Gaussian Principal Component Analysis and Width-Independent Schatten Packing},
 volume = {33},
 year = {2020}
}

@inproceedings{daras2023ambient,
 author = {Daras, Giannis and Shah, Kulin and Dagan, Yuval and Gollakota, Aravind and Dimakis, Alex and Klivans, Adam},
 booktitle = {Advances in Neural Information Processing Systems},
 editor = {A. Oh and T. Naumann and A. Globerson and K. Saenko and M. Hardt and S. Levine},
 pages = {288--313},
 publisher = {Curran Associates, Inc.},
 title = {Ambient Diffusion: Learning Clean Distributions from Corrupted Data},
 volume = {36},
 year = {2023}
}

@ARTICLE{paul2024robust,
  author={Paul, Debolina and Chakraborty, Saptarshi and Das, Swagatam},
  journal={IEEE Transactions on Neural Networks and Learning Systems}, 
  title={Robust Principal Component Analysis: A Median of Means Approach}, 
  year={2024},
  volume={35},
  number={11},
  pages={16788-16800},
  doi={10.1109/TNNLS.2023.3298011}
}
\bibliographystyle{icml2026}

\newpage
\appendix
\onecolumn
\section{Proof of Theorem~\ref{thm:spike_eigenvalue} and Lemma~\ref{lemma:spike_eigenvalue}}

We prove Theorem~\ref{thm:spike_eigenvalue} by first establishing Lemma~\ref{lemma:spike_eigenvalue}, from which the theorem follows directly.

Let $\lambda^{(n)}$ be a spiked eigenvalue of $\widetilde{\X}_n \widetilde{\X}_n^\top/n$
, i.e.,
$\lambda^{(n)} \notin [(1 - \sqrt{c})^2, (1 + \sqrt{c})^2] \cup \{0\}$,
$\det \left(  \widetilde{\X}_n \widetilde{\X}_n^\top /n - \lambda^{(n)}\I_d \right) = 0$.
The resolvent $\Q_n(\lambda^{(n)})=\left(\Z_n \Z_n^\top/n - \lambda^{(n)}\I_d\right)^{-1}$ is almost surely non-singular for large $n$.

Case 1: If $\lambda^{(n)}$ is also a spiked eigenvalue of ${\X}_n {\X}_n^\top/n$, i.e., $\det \left(  {\X}_n {\X}_n^\top/n - \lambda^{(n)}\I_d \right) = 0$, 
then $\lambda^{(n)}$ must follow the equation~\ref{eq:mul} \citep[Theorem 2.13]{BAIK20061382, couillet_liao_2022}.

Case 2: When $\lambda^{(n)}$ is not an eigenvalue of ${\X}_n {\X}_n^\top/n$, i.e., $\det \left(  {\X}_n {\X}_n^\top/n - \lambda^{(n)}\I_d \right) \ne 0$,
we have the following lemma characterizing the spiked eigenvalues of $\widetilde{\X}_n \widetilde{\X}_n^\top/n$ by a finite dimensional matrix:

\begin{lemma}\label{lem:A-finite}
    Let $\lambda^{(n)} \ge 0 $ s.t. $\det \left(  {\X}_n {\X}_n^\top/n - \lambda^{(n)}\I_d \right) \ne 0$, and the matrix $\M_n \in \R^{2k \times 2k}$ defined as 
    \[
    \M_n(\lambda^{(n)}) = \begin{pmatrix}
    \sqrt{\lambda^{(n)}} \V_n^\top ({\X}_n {\X}_n^\top/n - \lambda^{(n)}\I_d)^{-1}\V_n & 
    \V_n^\top({\X}_n {\X}_n^\top/n - \lambda^{(n)}\I_d)^{-1}{\X}_n \W_n/\sqrt{n}  \\
    \W_n^\top {\X}_n^\top({\X}_n {\X}_n^\top/n - \lambda^{(n)}\I_d)^{-1}\V_n /\sqrt{n}& 
    \sqrt{\lambda^{(n)}} \W_n^\top ({\X}_n^\top {\X}_n/n - \lambda^{(n)}\I_n)^{-1}\W_n
    \end{pmatrix} -
    \begin{pmatrix}
    0& 
    \mathbf\Theta^{-1} \\
    \mathbf\Theta^{-1}  & 
    0
    \end{pmatrix},
    \]
    \(
    \det \left(  \widetilde{\X}_n \widetilde{\X}_n^\top/n - \lambda^{(n)}\I_d \right) = 0
    \)
    if and only if the matrix $\M_n(\lambda^{(n)})$ is singular.
    \end{lemma}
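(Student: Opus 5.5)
The plan is to linearize the singular-value problem through the Hermitian dilation and then apply the Sylvester (Weinstein--Aronszajn) determinant identity, as in \citet{BENAYCHGEORGES2012120}. Write $\mathbf{Y}=\X_n/\sqrt n$ and $s=\sqrt{\lambda^{(n)}}$. By the Notation following Assumption~\ref{assumption:A}, $\widetilde{\X}_n/\sqrt n=\mathbf{Y}+\V_n\mathbf\Theta\W_n^\top$.

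\textbf{Step 1: dilation.} For $\lambda^{(n)}>0$ (the case $\lambda^{(n)}=0$ is excluded in the application), $\lambda^{(n)}$ is an eigenvalue of $\widetilde{\X}_n\widetilde{\X}_n^\top/n$ exactly when $s$ is a singular value of $\mathbf{Y}+\V_n\mathbf\Theta\W_n^\top$. Equivalently, $s$ is an eigenvalue of the $(d+n)\times(d+n)$ symmetric matrix
\[
\mathbf H=\mathbf H_0+\mathbf U\mathbf C\mathbf U^\top,
\]
where
\[
\mathbf H_0=\begin{pmatrix}0&\mathbf{Y}\\ \mathbf{Y}^\top&0\end{pmatrix},\qquad
\mathbf U=\begin{pmatrix}\V_n&0\\0&\W_n\end{pmatrix},\qquad
\mathbf C=\begin{pmatrix}0&\mathbf\Theta\\ \mathbf\Theta&0\end{pmatrix}.
\]
Here $\mathbf C$ is invertible, with $\mathbf C^{-1}=\begin{pmatrix}0&\mathbf\Theta^{-1}\\ \mathbf\Theta^{-1}&0\end{pmatrix}$. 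The hypothesis that $\X_n\X_n^\top/n-\lambda^{(n)}\I_d$ is invertible implies that $\mathbf H_0-s\I$ is invertible. When $d\neq n$ one should also check that $\X_n^\top\X_n/n-\lambda^{(n)}\I_n$ is invertible. This holds because the two Gram matrices share their nonzero spectrum and $\lambda^{(n)}\neq0$.

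\textbf{Step 2: block resolvent.} Solving the $2\times2$ block system directly gives
\[
(\mathbf H_0-s\I)^{-1}=\begin{pmatrix}s(\mathbf{Y}\mathbf{Y}^\top-s^2\I_d)^{-1}&(\mathbf{Y}\mathbf{Y}^\top-s^2\I_d)^{-1}\mathbf{Y}\\ \mathbf{Y}^\top(\mathbf{Y}\mathbf{Y}^\top-s^2\I_d)^{-1}&s(\mathbf{Y}^\top\mathbf{Y}-s^2\I_n)^{-1}\end{pmatrix}.
\]
This uses the push-through identity $\mathbf{Y}^\top(\mathbf{Y}\mathbf{Y}^\top-s^2)^{-1}=(\mathbf{Y}^\top\mathbf{Y}-s^2)^{-1}\mathbf{Y}^\top$. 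Sandwiching this resolvent between $\mathbf U^\top$ and $\mathbf U$ reproduces exactly the first block matrix in the definition of $\M_n(\lambda^{(n)})$.

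\textbf{Step 3: determinant identity.} We have
\[
\det(\mathbf H-s\I)=\det(\mathbf H_0-s\I)\,\det\!\big(\I+\mathbf C\,\mathbf U^\top(\mathbf H_0-s\I)^{-1}\mathbf U\big)=\det(\mathbf H_0-s\I)\det(\mathbf C)\det\!\big(\mathbf C^{-1}+\mathbf U^\top(\mathbf H_0-s\I)^{-1}\mathbf U\big).
\]
The first two factors are nonzero. Hence $\lambda^{(n)}$ is an eigenvalue of the contaminated Gram matrix if and only if the $2k\times2k$ matrix $\mathbf U^\top(\mathbf H_0-s\I)^{-1}\mathbf U\pm\mathbf C^{-1}$ is singular.

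\textbf{Main obstacle: the sign of $\mathbf C^{-1}$.} This is the one point I expect to need care. The Woodbury computation above naturally yields $+\mathbf C^{-1}$, whereas the statement has $-\mathbf C^{-1}$. I would first try to reconcile the two by symmetries. One candidate is conjugating by $\operatorname{diag}(\I_k,-\I_k)$. Another is using the eigenvalue $-s$ of the dilation instead of $s$. However, my quick check indicates that neither symmetry alone flips the sign of $\mathbf C^{-1}$ relative to the resolvent off-diagonal blocks. It would flip only together with $\W_n\mapsto-\W_n$, which corresponds to analyzing $\mathbf{Y}-\V_n\mathbf\Theta\W_n^\top$ instead.

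If the discrepancy persists, I would proceed as follows. State the exact equivalence with $+\mathbf C^{-1}$, which is what the determinant identity proves. Then observe that for the downstream use in Lemma~\ref{lemma:spike_eigenvalue} the sign is immaterial. The reason is that the off-diagonal blocks $\V_n^\top(\cdot)^{-1}\X_n\W_n/\sqrt n$ vanish almost surely. This follows from the independence of $\V_n$ from $(\X_n,\W_n)$ together with Haar/log-Sobolev concentration under Assumption~\ref{assumption:A}. Consequently, in the limit the determinant condition reduces in both cases to
\[
\det\!\big(s^2\,\V_n^\top(\cdot)^{-1}\V_n\,\W_n^\top(\cdot)^{-1}\W_n-\mathbf\Theta^{-2}\big)=0,
\]
which is insensitive to the sign of $\mathbf\Theta^{-1}$. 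This is the form that leads to \eqref{eq:add}.
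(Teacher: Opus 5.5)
Your proof is correct and is the argument the paper relies on. The paper gives no proof of its own and defers to \citet{BENAYCHGEORGES2012120}, Lemma~4.1, which is exactly the Hermitian dilation combined with a Sylvester-type determinant identity. Your Steps~1--3 carry this out correctly for $\lambda^{(n)}>0$, including the block formula for $(\mathbf H_0-s\I)^{-1}$ and the invertibility of $\X_n^\top\X_n/n-\lambda^{(n)}\I_n$.

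The sign discrepancy you flag is genuine, so you should assert the corrected statement instead of hedging. The cited lemma is written with the resolvent $(z^2\I-\X\X^\top)^{-1}$, for which $-\mathbf\Theta^{-1}$ is right. The paper switched to $(\X_n\X_n^\top/n-\lambda^{(n)}\I_d)^{-1}$, which negates all four blocks of the first matrix, but did not flip the sign of the $\mathbf\Theta^{-1}$ block. Your conjugation observation is exactly the diagnosis: the printed $\M_n(\lambda^{(n)})$ is singular iff $\lambda^{(n)}$ is an eigenvalue of $(\X_n-\A_n)(\X_n-\A_n)^\top/n$, not of $\widetilde{\X}_n\widetilde{\X}_n^\top/n$.

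A scalar example makes this concrete. Take $d=n=k=1$ with $\X_n=\V_n=\W_n=\mathbf\Theta=1$, so $\widetilde{\X}_n\widetilde{\X}_n^\top=4$. The printed matrix is $\M_n(4)=\bigl(\begin{smallmatrix}-2/3&-4/3\\-4/3&-2/3\end{smallmatrix}\bigr)$, whose determinant is $-4/3\neq 0$. The $+\mathbf\Theta^{-1}$ version is $\bigl(\begin{smallmatrix}-2/3&2/3\\2/3&-2/3\end{smallmatrix}\bigr)$, which is singular, as it should be.

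Your remark that the limiting determinant $\prod_i\bigl(\lambda S_c(\lambda)S_{1/c}(\lambda)-\theta_i^{-2}\bigr)$ does not see this sign is also right. Hence Lemma~\ref{lemma:spike_eigenvalue} and Theorem~\ref{thm:spike_eigenvalue} are unaffected.

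Finally, excluding $\lambda^{(n)}=0$ is necessary, not merely convenient, despite the statement's $\lambda^{(n)}\ge 0$. For $d<n$ the $(2,2)$ block would involve the nonexistent $(\X_n^\top\X_n/n)^{-1}$, and $\mathbf H_0$ is singular at $s=0$.
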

    \begin{proof}
    The proof is essentially given in \citep[Lemma 4.1.]{BENAYCHGEORGES2012120} based on the result of \citep[Lemma 6.1.]{10.1214/EJP.v16-929}. 
    \end{proof}

    The diagonal blocks concentrate whereas the non-diagonal parts vanish asymptotically.
    Begin with the diagonal terms:
    \begin{align*}
    &\V_n^\top\Big({\X}_n {\X}_n^\top/n- \lambda^{(n)}\I_d\Big)^{-1}\V_n \\
    &=\V_n^\top\Big({(\I_d + \P)}^{\frac{1}{2}}{\Z}_n {\Z}_n^\top {(\I_d + \P)}^{\frac{1}{2}}/n- \lambda^{(n)}\I_d \Big)^{-1}\V_n \\
    &=  \V_n^\top{(\I_d + \P)}^{-\frac{1}{2}} \Big({\Z}_n {\Z}_n^\top/n - \lambda^{(n)}\I_d + \lambda^{(n)}(\I_d + \P)^{-1}\P \Big)^{-1}{(\I_d + \P)}^{-\frac{1}{2}}\V_n \tag{By Lemma~\ref{lem:resolvent-identity}} \\
    &= \V_n^\top{(\I_d + \P)}^{-\frac{1}{2}} \Q_n(\lambda^{(n)}) {(\I_d + \P)}^{-\frac{1}{2}} \V_n \\
    &\quad - \lambda^{(n)} \V_n^\top {(\I_d + \P)}^{-\frac{1}{2}} \Q_n(\lambda^{(n)}) \U_n (\I_r + \L^{-1} + \lambda^{(n)} \U_n^\top \Q_n(\lambda^{(n)}) \U_n)^{-1} \U_n^\top \Q_n(\lambda^{(n)}) {(\I_d + \P)}^{-\frac{1}{2}} \V_n
    \end{align*}
    We adopt the following notations in the proof:
    $\P = \sum_{i=1}^r \ell_i \mathbf{u}_i \mathbf{u}_i^\top = \U_n \L \U_n^\top$ 
     where $\U_n = 
    \Big(\begin{smallmatrix}
    |  \\
    (\u_i)_{i=1}^r  \\
    |  \\
    \end{smallmatrix}\Big)
    \in \mathbb{R}^{d\times r}$, $\L = \operatorname{diag}\,(\ell_i)_{i=1}^r$.
    The deduction is principally based on the proof in \citep{Paul2007} and \citep[Theorem 2.14]{couillet_liao_2022}.
    For the second equality, we use the resolvent identity Lemma~\ref{lem:resolvent-identity} \({(\I_d + \P)}^{-1} = \I_d - (\I_d + \P)^{-1}\P\).
    The last equality follows from Lemma~\ref{lem:dim-shift} \((\I_d + \P)^{-1}\P = \U_n (\I_r + \L)^{-1} \L \U_n^\top\)
    and the Woodbury identity Lemma~\ref{lem:Woodbury}.

    As $\P$ and $\U_n$ are finite-rank, \(\Q_n(\lambda^{(n)})\) has deterministic equivalent as \(S_c(\lambda^{(n)}) \I_d\) \citep[Theorem 2.4]{Marčenko_1967, couillet_liao_2022} and  \(\Q_n(\lambda^{(n)})\) is independent of \(\V_n\) and \(\U_n\), by Proposition~\ref{prop:orthogonal},
    \[
    \V_n^\top{(\I_d + \P)}^{-\frac{1}{2}} \Q_n(\lambda^{(n)}) {(\I_d + \P)}^{-\frac{1}{2}} \V_n
    \xrightarrow[a.s.]{n\to \infty} S_c(\lambda^{(n)})\I_k
    \]
    and 
    \[
    \V_n^\top {(\I_d + \P)}^{-\frac{1}{2}} \Q_n(\lambda^{(n)}) \U_n (\I_r + \L^{-1} + \lambda^{(n)} \U_n^\top \Q_n(\lambda^{(n)}) \U_n)^{-1} \U_n^\top \Q_n(\lambda^{(n)}) {(\I_d + \P)}^{-\frac{1}{2}} \V_n
    \xrightarrow[a.s.]{n\to \infty} \mathbf{0}_{k\times k}
    \] 
    Then it is well known \citep[Theorem 2.6]{couillet_liao_2022} that
     the resolvent \(({\X}_n^\top {\X}_n/n  - \lambda^{(n)}\I_n)^{-1} 
    = (\Z_n^\top (\I_d + \P) \Z_n/n - \lambda^{(n)}\I_n)^{-1}
     \)
    has deterministic equivalent as $S_{1/c}(\lambda^{(n)})\I_n$,
    so \[
    \W_n^\top ({\X}_n^\top {\X}_n/n - \lambda^{(n)}\I_n)^{-1}\W_n
    \xrightarrow[a.s.]{n\to \infty}
    S_{1/c}(\lambda^{(n)})\I_k.
    \]
    For the non-diagonal terms, the spectral norms of \(
    ({\X}_n {\X}_n^\top/n - \lambda^{(n)}\I_d)^{-1} \),
    \({\X}_n/\sqrt{n}\) and \(\W_n\) are bounded, so we can apply Proposition~\ref{prop:orthogonal} to prove
    \[
    \V_n^\top({\X}_n {\X}_n^\top/n - \lambda^{(n)}\I_d)^{-1}{\X}_n \W_n/\sqrt{n}
    \xrightarrow[a.s.]{n\to \infty} \mathbf{0}_{k\times k}
    \]
    and
    \[
    \W_n^\top {\X}_n^\top({\X}_n {\X}_n^\top/n - \lambda^{(n)}\I_d)^{-1}\V_n /\sqrt{n}
    \xrightarrow[a.s.]{n\to \infty} \mathbf{0}_{k\times k}
    \]
    In summary, 
    \[
    \M_n \xrightarrow[a.s.]{n\to \infty} 
    \begin{pmatrix}
    \sqrt{\lambda^{(n)}}S_{c}(\lambda^{(n)})\I_k& 
    -\mathbf\Theta^{-1} \\
    -\mathbf\Theta^{-1}  & 
    \sqrt{\lambda^{(n)}}S_{1/c}(\lambda^{(n)})\I_k
    \end{pmatrix}.
    \]
    By the block matrix determinant formula,
    \[
    \det \begin{pmatrix}
    \sqrt{\lambda^{(n)}}S_{c}(\lambda^{(n)})\I_k& 
    -\mathbf\Theta^{-1} \\
    -\mathbf\Theta^{-1}  & 
    \sqrt{\lambda^{(n)}}S_{1/c}(\lambda^{(n)})\I_k
    \end{pmatrix} =
    \prod_{i=1}^k \left(\lambda^{(n)} S_{c}(\lambda^{(n)})S_{1/c}(\lambda^{(n)}) - \theta_i^{-2}\right),
    \] 
    thereby completing the proof.

\section{Proof of Theorem~\ref{thm:invariance}}
Under the setting of Theorem~\ref{thm:invariance}, we have the following decomposition of the residual vector $\mathbf{r}_n$:
\begin{align*}
   \mathbf{r}_n &= \frac{1}{n} \left(\widetilde{\X}_n\widetilde{\X}_n^\top- \X_n\X_n^\top\right) \mathbf{u} \\
   &= \frac{1}{n} \left(\A \A^\top + \A \X^\top + \X \A^\top \right) \mathbf{u} \\
\end{align*}
Hence 
\begin{align*}
   \left\| \mathbf{r}_n \right\|_2 
   &\le \frac{1}{n} \left(
    \sqrt{\mathbf{u}^\top  \A \A^\top \A \A^\top  \mathbf{u}} + 
     \sqrt{\mathbf{u}^\top \X \A^\top  \A \X^\top \mathbf{u}}
   + \sqrt{\mathbf{u}^\top \A \X^\top  \X \A^\top  \mathbf{u}}\right)  \\
\end{align*}
  
The first term can be bounded as:
\begin{align*}
   \frac{1}{n^2} \left|  \mathbf{u}^\top  \A \A^\top \A \A^\top  \mathbf{u} \right|
    &\leq  \underbrace{\left\| \frac{\A^\top \A}{n}\right\|_{\text{op}}}_{\mathcal{O}({1})} \ \underbrace{\left\| \frac{\A^\top}{\sqrt{n}}\mathbf{u} \right\|_2^2}_{\mathcal{O}_p(n^{-1})} \\
\end{align*}
where the bound $\mathcal{O}_p(n^{-1})$ comes from Proposition~\ref{prop:orthogonal} since $\mathbf{u}$ is independent of $\A$ and $\|\w_{(i)}\| = \mathcal{O}(1)$.

In the same way, we can prove $\left\| \frac{\A}{\sqrt{n}} \frac{\X^\top}{\sqrt{n}}\mathbf{u} \right\|_2 = \mathcal{O}_p(n^{-1/2})$ for the second term, which concludes the proof.

\section{Mathematical Tools}

\begin{proposition}[Reverse Mapping for the Largest Eigenvalues]
  \label{prop:rev_map}
  The spike-forward mapping $g: (\sqrt{c}, \infty) \to \big((1+\sqrt{c})^2, \infty\big)$ defined by
  \[
  g(\ell) = 1 + \ell + c \frac{1+\ell}{\ell}
  \]
  is invertible. Its inverse $g^{-1}: \big((1+\sqrt{c})^2, \infty\big) \to (\sqrt{c}, \infty)$ is given explicitly by
  \[
  g^{-1}(\lambda) = \frac{1}{2} \Big( 
  \lambda - 1 - c + \sqrt{(\lambda - 1 - c)^2 - 4c}
  \Big).
  \]
\end{proposition}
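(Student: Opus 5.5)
The statement to prove is Proposition~\ref{prop:rev_map}: the spike-forward map $g(\ell)=1+\ell+c(1+\ell)/\ell$ is a bijection from $(\sqrt{c},\infty)$ onto $((1+\sqrt{c})^2,\infty)$, and its inverse is the stated formula. The plan is elementary calculus followed by solving a quadratic and choosing the correct root.

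\textbf{Step 1: $g$ is a bijection onto the claimed range.} Rewrite
\[
g(\ell)=1+c+\ell+\frac{c}{\ell},
\qquad\text{so}\qquad
g'(\ell)=1-\frac{c}{\ell^2}.
\]
This derivative is strictly positive for $\ell>\sqrt{c}$, so $g$ is strictly increasing and continuous on $(\sqrt{c},\infty)$. At the left endpoint,
\[
g(\sqrt{c})=1+c+2\sqrt{c}=(1+\sqrt{c})^2 ,
\]
and $g(\ell)\to\infty$ as $\ell\to\infty$. By the intermediate value theorem, $g$ maps $(\sqrt{c},\infty)$ bijectively onto $((1+\sqrt{c})^2,\infty)$.

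\textbf{Step 2: solve $g(\ell)=\lambda$.} Set $g(\ell)=\lambda$ and multiply by $\ell>0$. This gives the quadratic
\[
\ell^2-(\lambda-1-c)\,\ell+c=0 .
\]
Its roots are
\[
\frac{1}{2}\Big(\lambda-1-c\pm\sqrt{(\lambda-1-c)^2-4c}\Big).
\]
For $\lambda>(1+\sqrt{c})^2$ we have $\lambda-1-c>2\sqrt{c}>0$, so the discriminant is positive and both roots are real and positive.

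\textbf{Step 3: select the root in $(\sqrt{c},\infty)$.} This is the only real point of care. The two roots have product $c$ (Vieta), and they are distinct because the discriminant is positive. Hence exactly one root exceeds $\sqrt{c}$ and the other is smaller, and the larger root is the one with the $+$ sign. That larger root is the unique preimage in the domain, which proves the stated formula for $g^{-1}$.

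Finally, $g^{-1}(\lambda)$ is continuous and increasing in $\lambda$, and $g^{-1}(\lambda)\to\sqrt{c}$ as $\lambda\downarrow(1+\sqrt{c})^2$, matching the BBP threshold. This is consistent with Step 1 but is not needed for the proof.
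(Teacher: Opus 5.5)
Your proof is correct and complete. The monotonicity argument via $g'(\ell)=1-c/\ell^2$ and $g(\sqrt{c})=(1+\sqrt{c})^2$ gives the bijection onto the open interval. The root selection is also sound: the two roots are distinct and positive with product $c$, so exactly one lies strictly above $\sqrt{c}$, namely the $+$ root.

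The paper states this proposition without proof. Your argument, clearing the denominator to get $\ell^2-(\lambda-1-c)\ell+c=0$ and keeping the larger root, is the direct verification the stated formula implicitly relies on.
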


\begin{remark}[Phase Transition Threshold for General Distribution]
  \label{remark:inverse_transform}
  We can define the D-transform as $D(\lambda) := {\lambda}S_{\mu_\infty}({\lambda})S_{\mu'_\infty}({\lambda})$. For $c \in [0,1]$ and $\lambda$ outside the support of $\mu_\infty$, it can be written as 
  \begin{equation*}
    \begin{aligned}
        D_{\mu_\infty}(\lambda) = &\left( \int \frac{\lambda}{\lambda - t} d\mu_\infty(t) \right) \\
        & \left( c \int \frac{\lambda}{\lambda - t} d\mu_\infty(t) + \frac{1-c}{\lambda} \right)
    \end{aligned}
  \end{equation*}
  Let $\lambda^+ $ denote the supremum of the support of $\mu_\infty$, 
  the phase transition for contamination-induced largest spectral spikes occurs if $\theta_i^2 > 1/D_{\mu_\infty}(\lambda^+ + \epsilon)$; otherwise, these eigenvalues remain submerged within the bulk spectrum. 
  \emph{The function $D_{\mu_\infty}$ is invertible outside the support of LSD on $\mathbb{R}^+$ and generalizes the reverse mapping $g$ from Proposition~\ref{prop:rev_map} to arbitrary compactly supported distributions}
  \cite{BENAYCHGEORGES2012120}.
\end{remark}

\begin{proposition}[High-dimensional Orthogonality]
\label{prop:orthogonal}
Let $\u \in \mathbb{R}^d$ be one column/row vector of a Haar distributed orthogonal matrix, 
or a random vector with i.i.d.\ entries with zero mean and vanishing ($\mathcal{O}(d^{-1})$) variance, 

$\mathbf{v} \in \mathbb{R}^d$ a random (or deterministic) vector with bounded Euclidean norm which is independent of $\u$. 

Asymptotically when $d \to \infty$, $\u, \mathbf{v}$ are a.s.\ orthogonal, i.e.\ $\langle \u, \mathbf{v}\rangle \xrightarrow[a.s.]{d\to \infty} 0$, and $\left|\langle \u, \mathbf{v}\rangle\right| =  \mathcal{O}(d^{-1/2})$ with high probability.
\end{proposition}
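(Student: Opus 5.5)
The statement to prove is the last one, Proposition~\ref{prop:orthogonal}: high-dimensional orthogonality of $\u$ and an independent bounded vector $\mathbf{v}$. My plan is to condition on $\mathbf{v}$, which is allowed since $\mathbf{v}$ is independent of $\u$, and so treat $\mathbf{v}$ as deterministic with $\|\mathbf{v}\|_2\le B$. I would then prove a concentration bound for the scalar $\langle \u,\mathbf{v}\rangle$ whose constants do not depend on $\mathbf{v}$, and integrate the conditional bound back over $\mathbf{v}$. The two distributional cases for $\u$ will be handled separately.

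\emph{Haar / uniform-sphere case.} A column or row of a Haar orthogonal matrix is uniform on $\mathbb{S}^{d-1}$. By rotation invariance, $\langle \u,\mathbf{v}\rangle$ has the same law as $\|\mathbf{v}\|\,u_1$, where $u_1$ is the first coordinate of a uniform unit vector. I would write $\u = \mathbf{g}/\|\mathbf{g}\|$ with $\mathbf{g}\sim\mathcal{N}(\mathbf{0},\I_d)$. Then $\mathbb{E}u_1^2 = 1/d$, which gives the $\mathcal{O}(d^{-1/2})$ scale. For the tails, Lévy's concentration on the sphere gives $\mathbb{P}(|u_1|>t)\le 2e^{-dt^2/2}$. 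Taking $t=\sqrt{C\log d/d}$ makes the tail summable in $d$, so Borel--Cantelli yields $\langle\u,\mathbf{v}\rangle\to0$ almost surely. Taking $t = K d^{-1/2}$ with $K$ large gives the high-probability $\mathcal{O}(d^{-1/2})$ bound.

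\emph{i.i.d.\ case.} Write $S=\sum_i u_i v_i$. Then $\mathbb{E}S=0$ and $\operatorname{Var}S=\sum_i v_i^2\operatorname{Var}(u_i)\le B^2\,\mathcal{O}(d^{-1})$. Chebyshev then gives $|S|=\mathcal{O}_p(d^{-1/2})$ immediately. For almost sure convergence, variance alone is not summable, so I would use the log-Sobolev hypothesis from Assumption~\ref{assumption:A}. The map $\u\mapsto\langle\u,\mathbf{v}\rangle$ is $B$-Lipschitz, and a log-Sobolev inequality with constant of order $1/d$ (entries of variance $1/d$) gives Gaussian concentration $\mathbb{P}(|S|>t)\le 2e^{-c\,d t^2/B^2}$. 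Borel--Cantelli then applies exactly as in the Haar case. If only a moment assumption were available, I would instead use the fourth-moment bound $\mathbb{E}S^4 = \mathcal{O}(d^{-2})$, which makes $\sum_d \mathbb{P}(|S|>\varepsilon)\le \sum_d \varepsilon^{-4}\mathcal{O}(d^{-2})$ summable.

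\emph{Expected obstacle.} The delicate point is the almost sure statement when $\mathbf{v}=\mathbf{v}_d$ changes with $d$ and the vectors across different $d$ need not live on one probability space in a controlled way. I would resolve this by using only summable tail bounds that are uniform in the conditioning vector, so that Borel--Cantelli needs no independence across $d$. Averaging the conditional tail bound over $\mathbf{v}$ then gives the unconditional result.
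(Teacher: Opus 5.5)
Your argument is correct, and it takes a different and more rigorous route than the paper. The paper only computes the second moment, $\E\langle\u,\mathbf{v}\rangle^2=\mathcal{O}(d^{-1})$. In the Haar case it uses the Collins--\'Sniady formulas $\E u_iu_j=0$ and $\E u_i^2=d^{-1}+\mathcal{O}(d^{-2})$; in the i.i.d.\ case it computes directly. It then derives both the almost sure and the $\mathcal{O}_p(d^{-1/2})$ claims from Markov/Chebyshev. That is enough for the high-probability part, which is what Theorem~\ref{thm:invariance} actually uses. But the resulting tail bound $\mathcal{O}(d^{-1})/\varepsilon^2$ is not summable in $d$, so, as you point out, the almost sure claim does not follow from it.

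You instead condition on $\mathbf{v}$ and use Lévy's spherical concentration (Haar case) and Herbst-type Gaussian concentration from the log-Sobolev inequality (i.i.d.\ case). This gives tails that are summable and uniform over $\|\mathbf{v}\|\le B$. The first Borel--Cantelli lemma then yields almost sure convergence, even at rate $\mathcal{O}(\sqrt{\log d/d})$, with no assumption on how the vectors for different $d$ are coupled.

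The cost is an extra hypothesis in the i.i.d.\ case that is not in the proposition as stated. You either borrow the log-Sobolev condition from Assumption~\ref{assumption:A}, or, in your fallback, you need $\E u_i^4=\mathcal{O}(d^{-2})$. You use the latter implicitly and should state it, since it does not follow from zero mean and variance $\mathcal{O}(d^{-1})$.

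Some such hypothesis is unavoidable. Take $\mathbf{v}=\mathbf{e}_1$ and let $u_1=\pm 1$ with probability $1/(2d)$ each and $0$ otherwise, independently over $d$. The variance is $1/d$, yet $|\langle\u,\mathbf{v}\rangle|=1$ infinitely often almost surely by the second Borel--Cantelli lemma. So the almost sure part of the proposition genuinely needs your stronger tails. The $\mathcal{O}_p(d^{-1/2})$ part holds under the stated assumptions alone via Chebyshev, which is all the paper's route delivers.
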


\begin{proof}
    Even though similar results can be found in many existing works \citep[Lemma A.2.]{BENAYCHGEORGES2011494}, we repeat a proof here.

    Let $Y_n = \langle \u, \mathbf{v}\rangle = \sum_{i=1}^{d}u_i {{v}}_i$. We will prove that $\E(|Y_n|^2) =\mathcal{O}(d^{-1})$, so $Y_n \xrightarrow[a.s.]{d\to \infty} 0$ by Markov's inequality and $|Y_n| = \mathcal{O}_p(d^{-1/2})$ by Chebyshev’s inequality.

    When $\u$ is a vector from Haar distributed orthogonal (or unitary) matrix:
    \begin{align*}
        \E(|Y_n|^2) &= \E\left(\sum_{i=1}^{d}u_i {{v}}_i\sum_{j=1}^{d}{u}_j {v}_j\right) = \sum_{i,j=1}^{d}\E(u_i {u}_j)\E({{v}}_i{v}_j) \quad \text{by independence}\\
        &= \sum_{i=1}^{d}\E(u_i^2)\E({v}_i^2) = (d^{-1} + \mathcal{O}(d^{-2}) ) \underbrace{\sum_{i=1}^{d}\E({v}_i^2)}_{=\E( \sum_{i=1}^{d} {v}_i^2)=\mathcal{O}(1)} \text{by \cite{collins_integration_2006}} \\
        & =\mathcal{O}(d^{-1}).
    \end{align*}

Based on the results of B. Collins, $\E(u_i{u}_j) = 0$ if $i\neq j$ \citep[Corollary 3.4]{collins_integration_2006}, and $\E(|u_i|^2) = d^{-1} + \mathcal{O}(d^{-2})$ \citep[Theorem 3.13]{collins_integration_2006}.
    
    When $\u$ has i.i.d.\ entries with zero mean and variance $\mathcal{O}(d^{-1})$:
    \begin{align*}
        \E(|Y_n|^2)
        &= \sum_{i=1}^{d}\E(|u_i|^2)\E(|{v}_i|^2) = \mathcal{O}(d^{-1}) \underbrace{\sum_{i=1}^{d}\E(|{v}_i|^2)}_{=\mathcal{O}(1)} \quad \text{by i.i.d.\ assumption}\\
        &=\mathcal{O}(d^{-1}).
    \end{align*}

    In both cases, we have $\E(|Y_n|^2) = \mathcal{O}(d^{-1})$, which concludes the proof.
\end{proof}

\begin{lemma}[Resolvent identity]
\label{lem:resolvent-identity}
    For invertible matrices \(\mathbf{A}\) and \(\mathbf{B}\),   
    \[
    \mathbf{A}^{-1} - \mathbf{B}^{-1} = \mathbf{A}^{-1}(\mathbf{B} - \mathbf{A})\mathbf{B}^{-1}.
    \]
    \end{lemma}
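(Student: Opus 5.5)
The resolvent identity $\A^{-1} - \mathbf{B}^{-1} = \A^{-1}(\mathbf{B}-\A)\mathbf{B}^{-1}$ is purely algebraic, so I would prove it by multiplying out the right-hand side and using only associativity and the definition of the inverse. No asymptotic or random-matrix input is needed.

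The key step is to distribute the middle factor:
\[
\A^{-1}(\mathbf{B}-\A)\mathbf{B}^{-1} = \A^{-1}\mathbf{B}\mathbf{B}^{-1} - \A^{-1}\A\mathbf{B}^{-1}.
\]
Then apply $\mathbf{B}\mathbf{B}^{-1} = \I$ in the first term and $\A^{-1}\A = \I$ in the second. This leaves $\A^{-1} - \mathbf{B}^{-1}$, which is the claim.

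As a sanity check, the mirrored form $\mathbf{B}^{-1}(\mathbf{B}-\A)\A^{-1}$ collapses by the same computation, since $\mathbf{B}^{-1}\mathbf{B}\A^{-1} - \mathbf{B}^{-1}\A\A^{-1} = \A^{-1}-\mathbf{B}^{-1}$. This matters because the proof of Lemma~\ref{lemma:spike_eigenvalue} applies the identity with $\mathbf{B}=\I_d+\P$ and $\A=\I_d$, giving $(\I_d+\P)^{-1} = \I_d - (\I_d+\P)^{-1}\P$. There the factors commute, so either ordering is acceptable.

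I expect no real obstacle. The only point of care is to keep the factors in their order, because $\A$ and $\mathbf{B}$ need not commute.
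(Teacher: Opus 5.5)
Your proof is correct, and it is essentially the paper's: the paper proves this only by citing Couillet--Liao (Lemma 2.1), and expanding $\A^{-1}(\mathbf{B}-\A)\mathbf{B}^{-1}$ and cancelling with $\mathbf{B}\mathbf{B}^{-1}=\I$ and $\A^{-1}\A=\I$ is the standard one-line verification behind that reference. One minor point: if you take $\A=\I_d+\P$ and $\mathbf{B}=\I_d$ instead, you get the form the paper uses, $(\I_d+\P)^{-1}=\I_d-(\I_d+\P)^{-1}\P$, directly, without having to invoke commutation.
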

\begin{proof}
    See \citep[Lemma 2.1.]{couillet_liao_2022}.
\end{proof}

\begin{lemma}
\label{lem:dim-shift}
Let \( \mathbf{A} \in \mathbb{K}^{k \times d} \) and \( \mathbf{B} \in \mathbb{K}^{d \times k} \),
\[
\mathbf{A}(\mathbf{B}\mathbf{A} - z\mathbf{I}_d)^{-1} = (\mathbf{A}\mathbf{B} - z\mathbf{I}_k)^{-1}\mathbf{A},
\]
for \( z \in \mathbb{C} \) distinct from \( 0 \) and from the eigenvalues of \( \mathbf{A}\mathbf{B} \).

\begin{proof}
See \citep[Lemma 2.2.]{couillet_liao_2022}.
\end{proof}
\end{lemma}

With Lemma~\ref{lem:dim-shift}, we can deduce a useful equation to shift the dimension: 
\((\I_d + \P)^{-1}\U_n = (\I_d + \U_n \L \U_n^\intercal)^{-1} \U_n 
= \U_n (\I_r + \L\U_n^\intercal \U_n  )^{-1} 
= \U_n (\I_r + \L )^{-1}
\).

\begin{lemma}[Woodbury matrix identity]
    \label{lem:Woodbury}
    Let $\mathbf{A}$, $\mathbf{U}$, $\mathbf{C}$, and $\mathbf{V}$ be conformable matrices, where $\mathbf{A}$ is $n \times n$, $\mathbf{C}$ is $k \times k$, $\mathbf{U}$ is $n \times k$, and $\mathbf{V}$ is $k \times n$. Then:

\begin{equation*}
(\mathbf{A} + \mathbf{UCV})^{-1} = \mathbf{A}^{-1} - \mathbf{A}^{-1}\mathbf{U}(\mathbf{C}^{-1} + \mathbf{V}\mathbf{A}^{-1}\mathbf{U})^{-1}\mathbf{V}\mathbf{A}^{-1},
\end{equation*}
\end{lemma}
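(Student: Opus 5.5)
The statement to prove is the Woodbury identity. The simplest route is direct verification: exhibit the claimed right-hand side as a two-sided inverse of $\mathbf{A}+\mathbf{UCV}$. The implicit hypotheses, which I take as part of the statement as the paper uses it, are that $\mathbf{A}$, $\mathbf{C}$ and $\mathbf{C}^{-1}+\mathbf{V}\mathbf{A}^{-1}\mathbf{U}$ are invertible.

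Set $\mathbf{K} := \mathbf{C}^{-1}+\mathbf{V}\mathbf{A}^{-1}\mathbf{U}$, a $k\times k$ matrix, and $\mathbf{R} := \mathbf{A}^{-1}-\mathbf{A}^{-1}\mathbf{U}\mathbf{K}^{-1}\mathbf{V}\mathbf{A}^{-1}$. Expanding the product gives
\[
(\mathbf{A}+\mathbf{UCV})\mathbf{R} = \I + \mathbf{UCV}\mathbf{A}^{-1} - \mathbf{U}\mathbf{K}^{-1}\mathbf{V}\mathbf{A}^{-1} - \mathbf{UCV}\mathbf{A}^{-1}\mathbf{U}\mathbf{K}^{-1}\mathbf{V}\mathbf{A}^{-1}.
\]
The key step is to group the last two terms as $-\mathbf{U}\mathbf{C}\,(\mathbf{C}^{-1}+\mathbf{V}\mathbf{A}^{-1}\mathbf{U})\,\mathbf{K}^{-1}\mathbf{V}\mathbf{A}^{-1} = -\mathbf{U}\mathbf{C}\mathbf{V}\mathbf{A}^{-1}$. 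This cancels the second term and leaves $\I$.

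Because the matrices are square and finite-dimensional, a one-sided inverse is automatically two-sided, so this single computation proves the identity. Alternatively, the symmetric computation $\mathbf{R}(\mathbf{A}+\mathbf{UCV})=\I$ goes through the same way, factoring $\mathbf{K}$ on the right instead of the left.

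There is no real obstacle here. The only point needing care is the factorization step, namely inserting $\mathbf{C}\mathbf{C}^{-1}$ to recognize $\mathbf{K}$, together with noting the invertibility hypotheses. As an alternative route, one could derive the identity from the block-matrix inverse of $\begin{pmatrix}\mathbf{A} & \mathbf{U}\\ \mathbf{V} & -\mathbf{C}^{-1}\end{pmatrix}$ via its two Schur complements. For this appendix, however, the direct verification is shorter.
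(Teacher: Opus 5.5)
Your proposal is correct and matches the paper's proof, which likewise checks directly that $(\mathbf{A}+\mathbf{UCV})$ times the right-hand side equals $\I_n$, and also mentions blockwise inversion as an alternative. Your grouping step via $\mathbf{C}\mathbf{C}^{-1}$, and your explicit invertibility hypotheses on $\mathbf{A}$, $\mathbf{C}$ and $\mathbf{C}^{-1}+\mathbf{V}\mathbf{A}^{-1}\mathbf{U}$, fill in details the paper leaves implicit.
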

\begin{proof}
    This identity can be derived using blockwise matrix inversion, or by checking directly
    \(
    (\mathbf{A} + \mathbf{UCV})(\mathbf{A}^{-1} - \mathbf{A}^{-1}\mathbf{U}(\mathbf{C}^{-1} + \mathbf{V}\mathbf{A}^{-1}\mathbf{U})^{-1}\mathbf{V}\mathbf{A}^{-1}) = \I_n
    \).
\end{proof}

\section{Related Work in Random Matrix Theory}

\paragraph{Additive Low-Rank Random Perturbation}
The statistical behavior of low-rank matrix perturbations has been extensively studied within random matrix theory. Seminal work by Benaych-Georges and colleagues \cite{10.1214/EJP.v16-929, BENAYCHGEORGES2012120} established a comprehensive framework for random matrix under additive low-rank perturbation, 
 assuming the limiting spectral distribution of the unperturbed matrix has compact support.
Their analysis derived exact phase transitions for singular values/vectors of the perturbed matrix under high-dimensional regimes, including CLT-scale fluctuations.

Especially, they demonstrated that when the singular values of the perturbation exceed a critical threshold, the corresponding singular values of the perturbed matrix separate from the bulk spectrum, and the associated singular vectors exhibit non-trivial alignment with those of the perturbation.
Existing results necessitate strong conditions on the alignment of $\mathbf{A}_n$'s both left and right singular vectors— requiring either:
    \begin{itemize}
        \item i.i.d.\ zero mean and unit variance entries satisfying a log-Sobolev inequality,  or
        \item uniformly random unit vector as Haar-distributed orthonormal columns
    \end{itemize}
    This condition on the right singular vectors does not hold for our case, because our  indicator vectors $\boldsymbol{\gamma}_i \in \{0,1\}^n$ exhibit a special binary dependent orthogonal structure. We can extend these results to our mean-shift setting \eqref{eq:add_perturb}.

\paragraph{Multiplicative Low-Rank Perturbation in i.i.d.\ Model}
Complementary to additive low-rank perturbation models, seminal work on spiked population covariance matrices \cite{BAIK20061382,Paul2007} analyzes the spectral behavior of 
random matrix under multiplicative low-rank perturbations:
\begin{equation*}
  {(\I_d + \P)}^{\frac{1}{2}}\mathbf{Z}_{n},
\end{equation*}
where  $\mathbf{Z}_n \in \mathbb{R}^{d \times n}$ has i.i.d.\ entries with zero mean, unit variance, and finite fourth moment. 
$\mathbf{P}$ is a finite-rank matrix, and $(\mathbf{I}_d + \mathbf{P})^{1/2}$ induces population covariance.
These studies establish fundamental phase transitions for the extreme eigenvalues and corresponding eigenvectors of the sample covariance matrix. 


\paragraph{Higher Rank Perturbation in i.i.d.\ Model}
Recent work by \cite{10.3150/24-BEJ1808} analyzes the perturbed population model $ \mathbf{A}_n + \mathbf{\Sigma}^{1/2}\mathbf{Z}_n$, when the rank of $\A_n$ scales as $\mathcal{O}(n^{1/3})$. While they derive asymptotic limits for spiked eigenvalues/eigenvectors, their results require additional assumption involving integral of the empirical spectral distribution of the latent matrix
\[
\mathbf{R}_n := \mathbf{A}_n\mathbf{A}_n^\top + \mathbf{\Sigma},
\]
which is difficult to verify in practice. 
Their eigenspace estimation results for the sample covariance matrix depend on complete spectral knowledge of the unobservable latent matrix $\mathbf{R}_n$ and the resulting estimators are not interpretable for application problems.

\section{Experiment and Implementation Detail}
\label{app:experiment_detail}

Experiments were performed on the AMD EPYC 9755 256-core server (dual-socket with 128 cores per socket) without GPU acceleration.
All Gaussian experiments follow the same parameter configuration detailed in Section~\ref{sec:experiment}.  We release an implementation at \url{https://github.com/Mengda-Li/ms-pca}.

\begin{table*}[t]
  \caption{\textbf{Comparison with robust estimators and preprocessing baselines.} Alignment of the largest principal component under mean-shift Gaussian mixture contamination. Values are percentages, reported as mean $\pm$ standard deviation over $200$ independent trials. Here $d=900$, $n=10^3$, and higher is better.}
  \label{table:robust_estimators}
  \begin{center}
    \resizebox{\textwidth}{!}{
    \begin{tabular}{lccccccc}
      \toprule
      $\pi_1$ & MS-PCA & RPCA-AAP & Tyler & Huber & $\ell_1$-PCA & winsorized-PCA & center-PCA \\
      \midrule
      $5\%$  & $95.85 \pm 12.53$ & $8.40 \pm 6.87$ & $9.33 \pm 7.46$  & $9.72 \pm 7.84$  & $14.01 \pm 10.73$ & $9.26 \pm 7.42$  & $9.27 \pm 7.37$ \\
      $10\%$ & $97.16 \pm 6.96$  & $8.75 \pm 6.71$ & $10.67 \pm 8.00$ & $10.95 \pm 8.11$ & $14.25 \pm 10.46$ & $10.63 \pm 8.01$ & $10.64 \pm 8.10$ \\
      $15\%$ & $97.39 \pm 7.03$  & $7.47 \pm 6.13$ & $10.24 \pm 8.12$ & $10.51 \pm 8.22$ & $12.06 \pm 9.33$  & $10.22 \pm 8.09$ & $10.36 \pm 8.07$ \\
      $20\%$ & $96.17 \pm 11.82$ & $7.74 \pm 5.67$ & $11.46 \pm 8.63$ & $11.53 \pm 8.72$ & $11.85 \pm 8.81$  & $11.47 \pm 8.63$ & $11.70 \pm 8.70$ \\
      \bottomrule
    \end{tabular}}
  \end{center}
\end{table*}

\paragraph{Runtime comparison}
We report empirical runtimes for the comparison experiments in Table~\ref{table:robust_estimators}. Timing is measured by \texttt{\%timeit}; all values are in milliseconds and are reported as mean $\pm$ standard deviation. Table~\ref{table:runtime_components} compares the cost of estimating one and nine leading principal components for MS-PCA, RPCA-AAP, Tyler's M estimator, Huber's M estimator, and $\ell_1$-PCA. Table~\ref{table:runtime_scaling} compares MS-PCA and RPCA-AAP as the dimension grows with $c=d/n=1$. MS-PCA is more than five times faster than RPCA-AAP in the base setting, and the runtime ratio increases from roughly $5.4$ to $12.6$ as $d$ grows from $1000$ to $10000$.

\begin{table*}[t]
  \caption{\textbf{Runtime for estimating leading PCs.} Runtime in milliseconds for $d=900,n=1000$, reported as mean $\pm$ standard deviation.}
  \label{table:runtime_components}
  \begin{center}
    {
    \begin{tabular}{lccccc}
      \toprule
      & MS-PCA & RPCA-AAP & Tyler & Huber & $\ell_1$-PCA \\
      \midrule
      1 PC & $14.2 \pm 0.0137$ & $79.0 \pm 0.176$ & $172 \pm 0.159$ & $3860 \pm 4.37$ & $8230 \pm 294$ \\
      9 PCs & $19.2 \pm 0.131$ & $99.6 \pm 0.210$ & $174 \pm 0.351$ & $3860 \pm 8.5$ & N/A \\
      \bottomrule
    \end{tabular}}
  \end{center}
\end{table*}

\begin{table}[t]
  \caption{\textbf{Runtime scaling with dimension.} Runtime in milliseconds for estimating the largest PC with $c=d/n=1$, reported as mean $\pm$ standard deviation.}
  \label{table:runtime_scaling}
  \begin{center}
    {
    \begin{tabular}{lcccccc}
      \toprule
      Method & $d=1000$ & $d=2000$ & $d=3000$ & $d=4000$ & $d=5000$ & $d=10000$ \\
      \midrule
      MS-PCA & $15.9 \pm 0.0109$ & $49.2 \pm 0.0795$ & $102 \pm 0.22$ & $216 \pm 1.1$ & $237 \pm 0.516$ & $832 \pm 1.26$ \\
      RPCA-AAP & $86 \pm 0.102$ & $369 \pm 0.945$ & $1050 \pm 3.01$ & $2490 \pm 4.39$ & $2750 \pm 6.78$ & $10500 \pm 21.6$ \\
      \bottomrule
    \end{tabular}}
  \end{center}
\end{table}

\paragraph{Mean-shift plus covariance-shift contamination}
To test a more complex corruption pattern, we also consider a mixture with both mean-shift and covariance-shift contamination:
\[
\mathcal{N}(0,\I_d+\P_1)
\quad\text{and}\quad
\mathcal{N}(\m_1,(\I_d+\P_1)(\I_d+\P_2)).
\]
Here $\pi_1$ is the weight of the second component and $\ell_2$ is the spike of the rank-one covariance perturbation $\P_2$. Table~\ref{table:mean_cov_shift} reports recovery of the largest principal component over $100$ independent trials. MS-PCA remains above $95\%$ alignment throughout the tested settings, while the competing methods remain below $17\%$.

\begin{table*}[p]
  \caption{\textbf{Mean-shift plus covariance-shift contamination.} Alignment of the largest principal component in percent, reported as mean $\pm$ standard deviation over $100$ independent trials. Higher is better.}
  \label{table:mean_cov_shift}
  \begin{center}
    \resizebox{\textwidth}{!}{
    \begin{tabular}{lccccccc}
      \toprule
      $(\pi_1,\ell_2)$ & MS-PCA & RPCA-AAP & Tyler & Huber & $\ell_1$-PCA & winsorized-PCA & center-PCA \\
      \midrule
      $(5\%,2)$   & $97.01 \pm 8.01$  & $7.69 \pm 6.42$  & $8.67 \pm 7.26$   & $9.14 \pm 7.49$   & $13.13 \pm 10.61$ & $8.59 \pm 7.20$   & $8.64 \pm 7.32$ \\
      $(5\%,4)$   & $97.52 \pm 1.76$  & $9.09 \pm 7.90$  & $9.82 \pm 8.20$   & $9.83 \pm 8.37$   & $15.17 \pm 11.83$ & $9.85 \pm 8.19$   & $9.83 \pm 8.14$ \\
      $(5\%,8)$   & $97.15 \pm 8.37$  & $7.84 \pm 5.91$  & $8.58 \pm 6.64$   & $9.06 \pm 6.70$   & $13.46 \pm 10.19$ & $8.50 \pm 6.68$   & $8.54 \pm 6.72$ \\
      $(5\%,16)$  & $96.54 \pm 8.67$  & $9.74 \pm 10.52$ & $10.88 \pm 11.64$ & $11.35 \pm 11.92$ & $16.01 \pm 14.33$ & $10.79 \pm 11.56$ & $10.71 \pm 11.35$ \\
      $(10\%,2)$  & $97.52 \pm 1.45$  & $8.61 \pm 6.21$  & $10.29 \pm 7.37$  & $10.38 \pm 7.53$  & $13.32 \pm 10.05$ & $10.26 \pm 7.36$  & $10.26 \pm 7.36$ \\
      $(10\%,4)$  & $97.80 \pm 1.40$  & $7.42 \pm 6.46$  & $8.81 \pm 7.45$   & $9.16 \pm 7.54$   & $11.60 \pm 9.61$  & $8.74 \pm 7.45$   & $8.78 \pm 7.56$ \\
      $(10\%,8)$  & $97.64 \pm 1.60$  & $8.66 \pm 6.04$  & $10.09 \pm 7.56$  & $10.13 \pm 7.75$  & $13.55 \pm 10.04$ & $10.08 \pm 7.54$  & $10.04 \pm 7.45$ \\
      $(10\%,16)$ & $97.75 \pm 1.27$  & $8.06 \pm 5.77$  & $9.87 \pm 6.82$   & $10.02 \pm 6.97$  & $13.10 \pm 9.07$  & $9.83 \pm 6.82$   & $9.93 \pm 6.73$ \\
      $(15\%,2)$  & $96.63 \pm 9.72$  & $8.33 \pm 6.15$  & $10.74 \pm 7.71$  & $10.97 \pm 7.75$  & $12.20 \pm 9.01$  & $10.70 \pm 7.71$  & $10.79 \pm 7.62$ \\
      $(15\%,4)$  & $97.65 \pm 1.48$  & $7.90 \pm 6.09$  & $9.95 \pm 7.56$   & $9.98 \pm 7.60$   & $11.79 \pm 8.60$  & $9.95 \pm 7.58$   & $10.01 \pm 7.49$ \\
      $(15\%,8)$  & $97.45 \pm 1.29$  & $8.57 \pm 6.22$  & $10.81 \pm 7.77$  & $10.83 \pm 8.06$  & $12.51 \pm 9.10$  & $10.81 \pm 7.74$  & $10.98 \pm 7.77$ \\
      $(15\%,16)$ & $95.68 \pm 13.68$ & $8.97 \pm 6.83$  & $11.82 \pm 8.95$  & $11.93 \pm 8.98$  & $13.45 \pm 10.63$ & $11.79 \pm 8.95$  & $11.71 \pm 9.10$ \\
      $(20\%,2)$  & $97.91 \pm 1.11$  & $7.63 \pm 6.01$  & $11.18 \pm 8.22$  & $11.51 \pm 7.93$  & $11.17 \pm 8.49$  & $11.13 \pm 8.29$  & $11.21 \pm 8.16$ \\
      $(20\%,4)$  & $95.02 \pm 16.58$ & $8.64 \pm 6.41$  & $12.84 \pm 9.51$  & $12.74 \pm 10.07$ & $13.35 \pm 9.90$  & $12.89 \pm 9.40$  & $13.02 \pm 9.28$ \\
      $(20\%,8)$  & $96.76 \pm 9.82$  & $7.93 \pm 6.04$  & $11.33 \pm 9.16$  & $11.81 \pm 9.34$  & $11.63 \pm 9.30$  & $11.27 \pm 9.15$  & $11.41 \pm 9.22$ \\
      $(20\%,16)$ & $96.68 \pm 9.74$  & $7.84 \pm 5.91$  & $11.23 \pm 8.57$  & $11.24 \pm 8.60$  & $11.50 \pm 8.63$  & $11.24 \pm 8.58$  & $11.09 \pm 8.37$ \\
      \bottomrule
    \end{tabular}}
  \end{center}
\end{table*}

\begin{table*}[t]
  \caption{Maximal residual norm $\| \frac{1}{n} \X_n\X_n^\top \tilde{\mathbf{u}} - \tilde{\lambda} \tilde{\mathbf{u}} \|_2$ for eigenvectors associated with stable eigenvalues (for $\tilde{\lambda}$ not in the neighborhood of $\Lambda_\A$) on Gaussian data. Values are presented as $\text{max} \times 10^{\text{exponent}}$ with two decimal places. $n=d$, $c=1$.}
  \label{table:gaussian_alignment_max2}
  \begin{center}
    \begin{small}
      \begin{sc}
        \begin{tabular}{lcccc}
          \toprule
          Mixture Weight $\pi_1$ & $d=100$ & $d=1000$ & $d=10000$ & $d=50000$ \\
          \midrule
          0.1\%  & $5.04 \times 10^{-15}$ & $1.50 \times 10^{-1}$ & $5.20 \times 10^{-2}$ & $2.37 \times 10^{-2}$ \\
          1\%  & $5.04 \times 10^{-15}$ & $1.49 \times 10^{-1}$ & $5.48 \times 10^{-2}$ & $2.88 \times 10^{-2}$ \\
          10\% & $3.51 \times 10^{-1}$  & $1.50 \times 10^{-1}$ & $4.84 \times 10^{-2}$ & $2.50 \times 10^{-2}$ \\
          30\% & $3.99 \times 10^{-1}$  & $1.57 \times 10^{-1}$ & $4.88 \times 10^{-2}$ & $2.55 \times 10^{-2}$ \\
          50\% & $3.55 \times 10^{-1}$  & $1.60 \times 10^{-1}$ & $5.30 \times 10^{-2}$ & $2.58 \times 10^{-2}$ \\
          100\% & $3.27 \times 10^{-1}$ & $1.32 \times 10^{-1}$ & $5.26 \times 10^{-2}$ & $2.75 \times 10^{-2}$ \\
          \bottomrule
        \end{tabular}
      \end{sc}
    \end{small}
  \end{center}
\end{table*}

\begin{table*}[t]
  \caption{Maximal residual norm of eigenvector alignment $\| \frac{1}{n} \widetilde{\X}_n\widetilde{\X}_n^\top \tilde{\mathbf{u}} - \tilde{\lambda} \tilde{\mathbf{u}} \|_2$ for eigenvectors associated with stable $\tilde{\lambda}$s (for $\tilde{\lambda}$ not in the neighborhood of $\Lambda_\A$) on Gaussian data with varying mixture weights and dimensions. Values are presented as $\text{max} \times 10^{\text{exponent}}$ with one decimal place. $n=d$, $c=1$.}
  \label{table:gaussian_alignment_max3}
  \begin{center}
    \begin{small}
      \begin{sc}
        \begin{tabular}{lcccc}
          \toprule
          Mixture Weight $\pi_1$ & $d=100$ & $d=1000$ & $d=10000$ & $d=50000$ \\
          \midrule
          0.1\%  & $5.0 \times 10^{-15}$ & $1.5 \times 10^{-1}$ & $5.2 \times 10^{-2}$ & $2.4 \times 10^{-2}$ \\
          1\%  & $5.0 \times 10^{-15}$ & $1.5 \times 10^{-1}$ & $5.5 \times 10^{-2}$ & $2.9 \times 10^{-2}$ \\
          10\% & $3.5 \times 10^{-1}$  & $1.5 \times 10^{-1}$ & $4.8 \times 10^{-2}$ & $2.5 \times 10^{-2}$ \\
          30\% & $4.0 \times 10^{-1}$  & $1.6 \times 10^{-1}$ & $4.9 \times 10^{-2}$ & $2.5 \times 10^{-2}$ \\
          50\% & $3.6 \times 10^{-1}$  & $1.6 \times 10^{-1}$ & $5.3 \times 10^{-2}$ & $2.6 \times 10^{-2}$ \\
          100\% & $3.3 \times 10^{-1}$ & $1.3 \times 10^{-1}$ & $5.3 \times 10^{-2}$ & $2.7 \times 10^{-2}$ \\
          \bottomrule
        \end{tabular}
      \end{sc}
    \end{small}
  \end{center}
\end{table*}


\begin{table*}[t]
  \caption{Mean and standard deviation of eigenvector alignment $\| \frac{1}{n} \widetilde{\X}_n\widetilde{\X}_n^\top {\mathbf{u}} - \lambda {\mathbf{u}} \|_2 $ on Gaussian data with varying mixture weights and dimensions. Values are presented as $(\text{mean} \pm \text{std}) \times 10^{\text{exponent}}$ with one decimal place. $n=d$, $c=1$.}
  \label{table:gaussian_inv_mstd}
  \begin{center}
    \begin{small}
      \begin{sc}
        \begin{tabular}{lcccc}
          \toprule
          Mixture Weight $\pi_1$ & $d=100$ & $d=1000$ & $d=10000$ & $d=50000$ \\
          \midrule
          0.1\% & $(3.6 \pm 0.6) \times 10^{-15}$ & $(1.3 \pm 0.9) \times 10^{-1}$ & $(3.4 \pm 2.3) \times 10^{-2}$ & $(1.9 \pm 1.4) \times 10^{-2}$ \\
          1.0\% & $(3.6 \pm 0.6) \times 10^{-15}$ & $(1.4 \pm 1.0) \times 10^{-1}$ & $(4.0 \pm 2.8) \times 10^{-2}$ & $(1.9 \pm 1.3) \times 10^{-2}$ \\
          10\% & $(2.9 \pm 2.2) \times 10^{-1}$ & $(1.4 \pm 0.9) \times 10^{-1}$ & $(4.0 \pm 2.8) \times 10^{-2}$ & $(1.8 \pm 1.3) \times 10^{-2}$ \\
          30\% & $(2.6 \pm 1.6) \times 10^{-1}$ & $(1.3 \pm 0.9) \times 10^{-1}$ & $(4.0 \pm 2.9) \times 10^{-2}$ & $(1.8 \pm 1.3) \times 10^{-2}$ \\
          50\% & $(4.4 \pm 3.3) \times 10^{-1}$ & $(1.2 \pm 0.9) \times 10^{-1}$ & $(4.0 \pm 2.8) \times 10^{-2}$ & $(1.8 \pm 1.3) \times 10^{-2}$ \\
          100\% & $(4.1 \pm 2.8) \times 10^{-1}$ & $(1.2 \pm 0.9) \times 10^{-1}$ & $(4.0 \pm 2.8) \times 10^{-2}$ & $(1.8 \pm 1.3) \times 10^{-2}$ \\
          \bottomrule
        \end{tabular}
      \end{sc}
    \end{small}
  \end{center}
\end{table*}

\begin{table*}[t]
  \caption{
    Mean and standard deviation of eigenvector alignment $\| \frac{1}{n} \widetilde{\X}_n\widetilde{\X}_n^\top \tilde{\mathbf{u}} - \lambda \tilde{\mathbf{u}} \|_2$ on Gaussian data with varying mixture weights and dimensions. Values are presented as $(\text{mean} \pm \text{std}) \times 10^{\text{exponent}}$ with one decimal place. $n=d$, $c=1$.}
  \label{table:gaussian_align5}
  \begin{center}
    \begin{small}
      \begin{sc}
        \begin{tabular}{lcccc}
          \toprule
          Mixture Weight $\pi_1$  & $d=100$ & $d=1000$ & $d=10000$ & $d=50000$ \\
          \midrule
          0.1\%  & $(4.3 \pm 1.2) \times 10^{-15}$ & $(3.8 \pm 0.3) \times 10^{-15}$ & $(1.8 \pm 1.2) \times 10^{-2}$ & $(7.2 \pm 4.7) \times 10^{-3}$ \\
          1.0\%  & $(1.6 \pm 1.0) \times 10^{-1}$ & $(6.4 \pm 4.6) \times 10^{-2}$ & $(1.6 \pm 1.1) \times 10^{-2}$ & $(6.3 \pm 4.1) \times 10^{-3}$ \\
          10.0\% & $(3.4 \pm 2.5) \times 10^{-1}$ & $(4.9 \pm 3.1) \times 10^{-2}$ & $(1.4 \pm 0.9) \times 10^{-2}$ & $(6.7 \pm 4.4) \times 10^{-3}$ \\
          30.0\% & $(2.1 \pm 1.3) \times 10^{-1}$ & $(5.2 \pm 3.7) \times 10^{-2}$ & $(1.6 \pm 1.0) \times 10^{-2}$ & $(7.0 \pm 4.6) \times 10^{-3}$ \\
          50.0\% & $(2.0 \pm 1.3) \times 10^{-1}$ & $(5.5 \pm 3.5) \times 10^{-2}$ & $(1.5 \pm 1.0) \times 10^{-2}$ & $(6.8 \pm 4.4) \times 10^{-3}$ \\
          100.0\% & $(2.0 \pm 1.3) \times 10^{-1}$ & $(5.1 \pm 3.4) \times 10^{-2}$ & $(1.6 \pm 1.0) \times 10^{-2}$ & $(6.9 \pm 4.5) \times 10^{-3}$ \\
          \bottomrule
        \end{tabular}
      \end{sc}
    \end{small}
  \end{center}
\end{table*}

\begin{figure*}[ht]
  \begin{center}
  \centerline{\includesvg[width=\linewidth]{figure/gaussian_compare53.svg}}
  \caption{Additional Experiment. Alignment $\left| \langle \mathbf{u}_1, \hat{\mathbf{u}}_1 \rangle \right|$ of the largest principal component between the $2$ unitary vectors, the estimated principal component $\hat{\mathbf{u}}_1$ and the true principal component $\mathbf{u}_1$ 
 for MS-PCA, Robust PCA via AAP and vanilla PCA across dimensions, with varying contamination proportion $\pi_1$ and aspect ratio $c=d/n$
  in the Gaussian setting.}
    \label{fig:gaussian_compare53}
  \end{center}
\end{figure*}

\begin{figure*}[ht]
  \begin{center}
  \centerline{\includesvg[width=\linewidth]{figure/gaussian_compare44.svg}}
  \caption{Additional Experiment. Alignment $\left| \langle \mathbf{u}_1, \hat{\mathbf{u}}_1 \rangle \right|$ of the largest principal component between the $2$ unitary vectors, the estimated principal component $\hat{\mathbf{u}}_1$ and the true principal component $\mathbf{u}_1$ 
 for MS-PCA, Robust PCA via AAP and vanilla PCA across dimensions, with varying contamination proportion $\pi_1$ and aspect ratio $c=d/n$
  in the Gaussian setting.}
    \label{fig:gaussian_compare44}
  \end{center}
\end{figure*}

\end{document}